\documentclass[12pt]{article}

\usepackage[colorlinks]{hyperref}            
\usepackage{color}
\usepackage{graphicx,subfigure,amsmath,amssymb,amsfonts,bm,epsfig,epsf,url,dsfont}
\usepackage{times}
\usepackage{bbm}      
\usepackage{booktabs}
\usepackage{cases}
\usepackage{fullpage}
\usepackage[small,bf]{caption}
\usepackage[top=1in,bottom=1in,left=1in,right=1in]{geometry}

\newtheorem{theorem}{Theorem}
\newtheorem{definition}{Definition}
\newtheorem{lemma}{Lemma}
\newcommand{\BlackBox}{\rule{1.5ex}{1.5ex}}  % end of proof
\newenvironment{proof}{\par\noindent{\bf Proof\ }}{\hfill\BlackBox\\[2mm]}

\newcommand{\conv}{Conv}
\newcommand{\Conv}{Conv}

\newcommand{\K}{\mathrm{KL}}
\newcommand{\tz}{\tilde{z}}

\renewcommand{\phi}{\varphi}

\newcommand{\IND}{\mathbbm{1}}

\renewcommand{\P}{\mathbb{P}}
\newcommand{\E}{\mathbb{E}}
\newcommand{\N}{\mathbb{N}}
\newcommand{\R}{\mathbb{R}}

\newcommand{\KL}{\mathrm{KL}}

\newcommand{\cD}{\mathcal{D}}
\newcommand{\cZ}{\mathcal{Z}}
\newcommand{\cA}{\mathcal{A}}
\newcommand{\cB}{\mathcal{B}}

\newcommand{\oD}{\overline{\mathcal{D}}}
\newcommand{\oR}{\overline{R}}

\def\ds1{\mathds{1}}
\renewcommand{\epsilon}{\varepsilon}

\newcommand\Var{{\dsV\text{ar}}\,}
\newcommand\dsV{\mathbb{V}} 
\newcommand\lam{\lambda}

\newcommand{\argmin}{\mathop{\mathrm{argmin}}}

\renewcommand{\tilde}{\widetilde}

%Cadres d'algorithmes
\newlength{\minipagewidth}
\setlength{\minipagewidth}{\textwidth}
\setlength{\fboxsep}{3mm}
\addtolength{\minipagewidth}{-\fboxrule}
\addtolength{\minipagewidth}{-\fboxrule}
\addtolength{\minipagewidth}{-\fboxsep}
\addtolength{\minipagewidth}{-\fboxsep}
\newcommand{\bookbox}[1]{
\par\medskip\noindent
\framebox[\textwidth]{
\begin{minipage}{\minipagewidth}
{#1}
\end{minipage} } \par\medskip }

\newcommand{\beq}{\begin{equation}}
\newcommand{\eeq}{\end{equation}}

\newcommand{\beqa}{\begin{eqnarray}}
\newcommand{\eeqa}{\end{eqnarray}}

\newcommand{\beqan}{\begin{eqnarray*}}
\newcommand{\eeqan}{\end{eqnarray*}}

\def\ba#1\ea{\begin{align*}#1\end{align*}} %\ba = \begin{algin*}, \ea = \end{align*}
\def\banum#1\eanum{\begin{align}#1\end{align}} %\banum = \begin{algin}, \eanum

\begin{document}

\title{Regret in Online Combinatorial Optimization
}

\author{
Jean-Yves Audibert \\
Imagine, Universit{\'e} Paris Est, \\
and Sierra, CNRS/ENS/INRIA \\
{\tt audibert@imagine.enpc.fr} \\ \\
S{\'e}bastien Bubeck \\
Department of Operations Research and Financial Engineering, \\
Princeton University \\
{\tt sbubeck@princeton.edu} \\ \\
G{\'a}bor Lugosi \\
ICREA and Pompeu Fabra University \\
{\tt gabor.lugosi@upf.edu}  \\ \\
}

\date{\today}

\maketitle

\begin{abstract} 
We address online linear optimization problems when the possible
actions of the decision maker are represented by binary vectors.  The
regret of the decision maker is the difference between her realized
loss and the minimal loss she would have achieved by picking, in
hindsight, the best possible action.  Our goal is to understand the
magnitude of the best possible (minimax) regret.  We study the problem
under three different assumptions for the feedback the decision maker
receives: full information, and the partial information models of the
so-called ``semi-bandit'' and ``bandit'' problems.  In the full information
case we show that the standard
exponentially weighted average forecaster is a provably suboptimal strategy.
For the semi-bandit model, by combining the
Mirror Descent algorithm and the INF (Implicitely Normalized
Forecaster) strategy, we are able to prove the first optimal bounds. 
Finally, in the bandit case we discuss existing results
in light of a new lower bound, and suggest a conjecture on the optimal
regret in that case.
\end{abstract}

\section{Introduction.}

In this paper we consider the framework of online linear optimization.
The setup may be described as a repeated game between a ``decision maker''
(or simply ``player'' or ``forecaster'') and an ``adversary'' as follows:
at each time instance $t = 1, \ldots, n$, the player chooses, possibly in a randomized way, an action from a given finite action set $\cA \subset \R^d$. The action chosen by the player at time $t$ is denoted by $a_t \in \cA$. Simultaneously to the player, the adversary chooses a loss vector $z_t \in \cZ \subset \R^d$ and the loss incurred by the forecaster is $a_t^T z_t$. The goal of the player is to minimize the expected cumulative loss $\E \sum_{t=1}^n a_t^T z_t$ where the expectation is taken with respect to the player's internal randomization (and eventually the adversary's randomization). 

In the basic ``full-information'' version of this problem, the player observes the adversary's move $z_t$ at the end of round $t$. Another important model for feedback is the so-called {\em bandit} problem, in which the player only observes the incurred loss $a_t^T z_t$. As a measure of performance we define the regret \footnote{In the full
  information version, it is straightforward to obtain upper bounds
  for the stronger notion of regret 
   $\E \sum_{t=1}^n a_t^T z_t - \E \min_{a \in \cA}
  \sum_{t=1}^n a^T z_t $ which is always at least as large as
  $R_n$. However, for partial information games, this requires more
  work. In this paper we only consider $R_n$ as a measure of the regret.} 
of the player as
\[
R_n = \E \sum_{t=1}^n a_t^T z_t - \min_{a \in \cA} \E \sum_{t=1}^n a^T z_t~.
\] 
In this paper we address a specific example of online linear optimization: we assume that the action set $\cA$ is a subset of the $d$-dimensional hypercube $\{0,1\}^d$ such that $\forall a \in \cA, ||a||_1 = m$, and the adversary has a bounded loss per coordinate, that is\footnote{Note that since all actions have the same size, i.e. $||a||_1 = m, \forall a\in \cA$, one can reduce the case of  $\cZ = [\alpha, \beta]^d$ to $\cZ = [0,1]^d$ via a simple renormalization.}  $\cZ = [0,1]^d$. We call this setting {\em online combinatorial optimization}. As we will see below, this restriction of the general framework contains a rich class of problems. Indeed, in many interesting cases, actions are naturally represented by Boolean vectors.

In addition to the full information and bandit versions of online combinatorial optimization, we also consider another type of feedback which makes sense only in this combinatorial setting. In the {\em semi-bandit} version, we assume that the player observes only the coordinates of $z_t$ that were played in $a_t$, that is the player observes the vector $(a_t(1) z_t(1), \ldots, a_t(d) z_t(d))$. All three variants of online combinatorial optimization are sketched in Figure \ref{fig:1}.
%G more rigorous def
More rigorously, online combinatorial optimization is defined as a repeated
game between a ``player'' and an ``adversary.'' At each round $t=1,\ldots,n$
of the game, the player chooses a probability distribution $p_t$ over
the set of actions $\cA\subset \{0,1\}^d$ and draws a random action $a_t\in \cA$
according to $p_t$. Simultaneously, the adversary chooses a vector
$z_t\in [0,1]^d$. 
More formally, $z_t$ is a measurable function of the ``past'' 
$(p_s,a_s,z_s)_{s=1,\ldots,t-1}$. In the full information case, 
$p_t$ is a measurable function of $(p_s,a_s,z_s)_{s=1,\ldots,t-1}$. 
In the semi-bandit case, $p_t$ is a measurable function of 
$(p_s,a_s,(a_s(i)z_s(i))_{i=1,\ldots,d})_{s=1,\ldots,t-1}$ and in the
bandit problem it is a measurable function of
$(p_s,a_s,(a_s^Tz_s))_{s=1,\ldots,t-1}$. 

\begin{figure}[t]
\bookbox{\small
{Parameters:} set of actions $\cA \subset \{0,1\}^d$; number of rounds $n \in \N$.

\medskip\noindent
For each round $t=1,2,\ldots,n$;
\begin{itemize}
\item[(1)]
the player chooses a probability distribution $p_t$ over $\cA$ and draws a random action $a_t \in \cA$ according to $p_t$;
\item[(2)]
\mbox{simultaneously, the adversary selects a loss vector $z_t \in [0,1]^d$ (without revealing it)};
\item[(3)]
the player incurs the loss $a_t^T z_t $. She observes
\begin{itemize}
\item the loss vector $z_t$ in the full information setting,
\item the coordinates $z_t(i) a_t(i)$ in the semi-bandit setting,
\item the instantaneous loss $a_t^T z_t $ in the bandit setting.
\end{itemize}
\end{itemize}

\medskip\noindent
{\em Goal:} The player tries to minimize her cumulative loss $\sum_{t=1}^n a_t^T z_t$.
}
\caption{Online combinatorial optimization.}
\label{fig:1}
\end{figure}

\subsection{Motivating examples.} \label{sec:ex}
Many problems can be tackled under the online combinatorial optimization framework. We give here three simple examples:
\begin{itemize}
\item \textbf{$\mathbf{m}$-sets.} In this example we consider the set $\cA$ of all 
$\binom{d}{m}$
Boolean vectors in dimension $d$ with exactly $m$ ones. In other words, at every time step, the player selects $m$ actions out of $d$ possibilities. When $m=1$, the semi-bandit and bandit versions coincide and correspond to the standard (adversarial) multi-armed bandit problem.
\item \textbf{Online shortest path problem.} Consider a communication network represented by a graph in which one has to send a sequence of packets from one fixed vertex to another.
For each packet one chooses a path 
through the graph and suffers a certain delay which is the sum of the delays on the edges of the path. Depending on the traffic,
the delays on the edges may change, and, at the end of each round, according to the assumed level of feedback, the player 
observes either the delays of all edges, the delays of each edge on the chosen path, or only the total delay of the chosen
path. The player's objective is to minimize the total delay for the sequence of packets. 

One can represent the set of valid paths from the starting vertex to the end vertex as a set $\cA \subset \{0,1\}^d$ where $d$ is the number 
of edges. If at time $t$, $z_t \in [0,1]^d$ is the vector of delays on the edges, then the delay of a path $a \in \cA$ is 
$z_t^T a$. Thus this problem is an instance of online combinatorial optimization
in dimension $d$, where $d$ is the number of edges in the graph. In this paper
we assume, for simplicity, that all valid paths have the same length $m$.
\item \textbf{Ranking.} Consider the problem of selecting a ranking of $m$ items out of $M$ possible items. For example a website could have a set of $M$ ads, and it has to select a ranked list of $m$ of these ads to appear on the webpage. One can rephrase this problem as selecting a matching of size $m$ on the complete bipartite graph $K_{m,M}$ 
(with $d=m \times M$ edges). In the online learning version of this problem, each day the website chooses one such list, and gains one dollar for each click on the ads. This problem can easily be formulated as an online combinatorial optimization problem.
\end{itemize}
Our theory applies to many more examples, such as spanning trees (which can be useful in certain communication problems), or $m$-intervals.

\subsection{Previous work.} \label{sec:previousworks}
% Numerous works have already considered specific examples of online combinatorial optimization. Most of them considered a specific set $\cA$ as well as one of the three feedback assumptions. The following results were obtained:
\begin{itemize}
\item{\textbf{Full Information.}  The full-information setting is now fairly well understood, and an optimal regret bound (in terms of $m, d, n$) was obtained by Koolen, Warmuth, and Kivinen \cite{KWK10}. 
Previous papers under full information feedback also include 
Gentile and Warmuth \cite{GW98},
Kivinen and Warmuth \cite{KW01}, 
Grove, Littlestone, and Schuurmans \cite{GLS01}, 
Takimoto and Warmuth \cite{TW03}, 
Kalai and Vempala \cite{KV05}, 
Warmuth and Kuzmin \cite{WK08}, 
Herbster and Warmuth \cite{HW09},
and Hazan, Kale, and Warmuth \cite{HKW10}.}
\item{\textbf{Semi-bandit.} The first paper on the adversarial multi-armed bandit problem (i.e., the special case of $m$-sets with $m=1$) is 
by Auer, Cesa-Bianchi, Freund, and Schapire \cite{ACFS03} who derived a regret bound of order $\sqrt{d n \log d}$. This result was improved to $\sqrt{d n}$ by Audibert and Bubeck \cite{AB09, AB10}. 
Gy\"orgy, Linder, Lugosi, and Ottucs\'ak \cite{GLLO07} consider the online shortest path problem and derive suboptimal regret bounds (in terms of the dependency on $m$ and $d$). Uchiya, Nakamura, and Kudo \cite{UNK10} (respectively Kale, Reyzin, and Schapire \cite{KRS10}) derived optimal regret bounds for the case of $m$-sets (respectively for the problem of ranking selection) up to logarithmic factors.}
\item{\textbf{Bandit.} McMahan and Blum \cite{MB04}, and Awerbuch and Kleinberg \cite{AK04} were the first to consider this setting, and obtained suboptimal regret bounds (in terms of $n$). The first paper with optimal dependency in $n$ was by Dani, Hayes, and Kakade \cite{DHK08}. The dependency on $m$ and $d$ was then improved in various ways by Abernethy, Hazan, and Rakhlin \cite{AHR08}, Cesa-Bianchi, and Lugosi \cite{CL11}, and Bubeck, Cesa-Bianchi, and Kakade \cite{BCK12}. We discuss these bounds in detail in Section \ref{sec:bandit}. In particular, we argue that the optimal regret bound in terms of $d$ (and $m$) is still an open problem.}
\end{itemize}
We also refer the interested reader to the recent survey \cite{BC12} for an overview of bandit problems in various other settings.

\subsection{Contribution and contents of the paper.}
In this paper we are primarily interested in the optimal {\em minimax regret} in terms of $m, d$ and $n$. More precisely, our aim is to determine the order of magnitude of the following quantity: For a given feedback assumption, write $\sup$ for the supremum over all adversaries and $\inf$ for the infimum over all allowed strategies for the player under the feedback assumption. 
%G 
(Recall the definition of ``adversary'' and ``player'' from the introduction.) 
Then we are interested in
$$\max_{\cA \subset \{0,1\}^d: \forall a \in \cA, ||a||_1 = m} \inf \sup R_n .$$

%We prove upper and lower bounds for the minimax regret under the different
%feedback assumptions. The upper bounds are obtained by constructing
%prediction strategies. We also discuss the computational complexity of these 
%strategies.

Our contribution to the study of this quantity is threefold. First, we unify the algorithms used in 
Abernethy, Hazan, and Rakhlin \cite{AHR08}, 
Koolen, Warmuth, and Kivinen \cite{KWK10}, 
Uchiya, Nakamura, and Kudo \cite{UNK10}, 
and Kale, Reyzin, and Schapire \cite{KRS10} under the umbrella of mirror descent. The idea of mirror descent goes back to 
Nemirovski \cite{Nem79}, 
Nemirovski and Yudin \cite{NY83}. 
A somewhat similar concept was re-discovered in online learning by
Herbster and Warmuth \cite{HW98}, 
Grove, Littlestone, and Schuurmans \cite{GLS01},  
Kivinen and Warmuth \cite{KW01} under the name of potential-based gradient descent, see \cite[Chapter 11]{CL06}. Recently, these ideas have been flourishing, see for instance 
Shalev-Schwartz \cite{Sha07}, 
Rakhlin \cite{Rak09}, 
Hazan \cite{Haz11}, 
and Bubeck \cite{Bub11}.  Our main theorem (Theorem \ref{th:OSMD}) allows one to recover almost all known regret bounds for online combinatorial optimization. This first contribution leads to our second main
result, the improvement of the known upper bounds for the semi-bandit game. In particular, we propose a different proof of the minimax regret bound of the order of $\sqrt{n d}$ in the standard $d$-armed bandit game that is much simpler than the one provided in Audibert and Bubeck \cite{AB10} (which also improves the constant factor). In addition to these upper bounds we prove two new lower bounds. First we answer a question of Koolen, Warmuth, and Kivinen \cite{KWK10} by showing that the exponentially weighted average forecaster is provably suboptimal for online combinatorial optimization. Our second lower bound is a minimax lower bound in the bandit setting which improves known results by an order of magnitude. A summary of known bounds and the new bounds proved in this paper can be found
in Table \ref{table:1}.
% and \ref{table:2}.

The paper is organized as follows. In Section \ref{sec:alg} we introduce the two algorithms discussed in this paper. In particular in Section \ref{sec:Exp2} we discuss the popular exponentially weighted average forecaster and we show that it is a provably suboptimal strategy. Then in Section \ref{sec:OSMD} we describe our main algorithm, {\sc osmd} (Online Stochastic Mirror Descent), and prove a general regret bound in terms of the Bregman divergence of the Fenchel-Legendre dual of the 
%G regularizer. 
Legendre function defining the strategy.
In Section \ref{sec:semibandit} we derive upper bounds for the regret in the semi-bandit case for {\sc osmd} with appropriately chosen Legendre functions. Finally in Section \ref{sec:bandit} we prove a new lower bound for the bandit setting, and we formulate a conjecture on the correct order of magnitude of the regret for that problem based on this new result and the regret bounds obtained in \cite{AHR08, BCK12}.

\begin{table}[t]
\begin{center}
\begin{tabular}{c|c|c|c}
 & { Full Information} & { Semi-Bandit} & {Bandit} 
\\  \hline & & & \\
  {Lower Bound} & $m \sqrt{n \log \frac{d}{m}}$ & $\sqrt{m d n}$ & $\mathbf{m \sqrt{d n}}$
\\  \hline & & & \\
  {Upper Bound} & $m \sqrt{n \log \frac{d}{m}}$ & $\mathbf{\sqrt{m d n}}$ & $m^{3/2} \sqrt{d n \log\frac{d}{m}}$
  \end{tabular}
\end{center}
\caption{Bounds on the minimax regret (up to constant factors). 
The new results are set in boldface. In this paper we also show that {\sc exp2} in the full information case has a regret bounded below by $d^{3/2} \sqrt{n}$ (when $m$ is of order $d$).}
\label{table:1}
\end{table}

%\begin{table}[t]
%\begin{center}
%\begin{tabular}{c|c|c|c}
%  & { Full Information} & { Semi-Bandit} & {Bandit} 
%\\  \hline & & & \\
% {{\sc exp2}} & $m^{3/2} \sqrt{n \log\frac{d}{m}}$ & $\mathbf{m \sqrt{d n \log\frac{d}{m} } }$ & $m^{3/2} \sqrt{d n \log\frac{d}{m} }$ 
%\\ \hline & & & \\
%    {{\sc fpl}} & $m \sqrt{d n}$ & - & -  
%\\  \hline & & & \\
%{{\sc osmd}} & $m \sqrt{n \log\frac{d}{m}}$ & $\mathbf{\sqrt{m d n}}$ & $m d^{3/2} \sqrt{n \log n}$ 
%  \end{tabular}
%\end{center}
%\caption{Upper bounds on $R_n$ for specific algorithms. The new results are in boldface. 
%We also show that the bound for {\sc exp2} in the full information setting is not improvable.
%}
%\label{table:2}
%\end{table}

\section{Algorithms.} \label{sec:alg}
In this section we discuss two classes of algorithms that have been proposed for online combinatorial optimization.
%Three classes of algorithms have been proposed for online combinatorial optimization. In this section we review them, discuss their %computational complexity, and prove general regret bounds that will be useful to derive bounds under specific feedback assumptions.

\subsection{Expanded Exponential weights ({\sc exp2}).} \label{sec:Exp2}
The simplest approach to online combinatorial optimization is to consider
each action of $\cA$ as an independent ``expert,'' and then apply a generic  regret minimizing strategy. Perhaps the most popular such strategy
is the exponentially weighted average forecaster (see, e.g., \cite{CL06}).
(This strategy is sometimes called Hedge, see Freund and Schapire \cite{FS97}.) 
 We call the resulting
strategy for the online combinatorial optimization problem {\sc exp2}, see Figure
\ref{fig:Exp2}. In the full information setting, {\sc exp2} corresponds
to ``Expanded Hedge,'' as defined in Koolen, Warmuth, and Kivinen \cite{KWK10}. In the
semi-bandit case, {\sc exp2} was studied by Gy\"orgy, Linder, Lugosi, and Ottucs\'ak \cite{GLLO07}
while in the bandit case in 
Dani, Hayes, and Kakade \cite{DHK08},
Cesa-Bianchi and Lugosi \cite{CL11}, and
Bubeck, Cesa-Bianchi, and Kakade  \cite{BCK12}. 
Note that in the bandit case, {\sc exp2} is mixed with an {\em exploration distribution}, see Section \ref{sec:bandit} for more details.

Despite strong interest in this strategy, no optimal regret bound has been derived for it in the combinatorial setting. More precisely, the best bound (which can be derived from a standard argument, see for example \cite{DHK08} or \cite{KWK10}) is of order $m^{3/2} \sqrt{n \log \left( \frac{d}{m} \right)}$. On the other hand, in \cite{KWK10} the authors showed that by using Mirror Descent (see next section) with the negative entropy, one obtains a regret bounded by $m \sqrt{n \log \left( \frac{d}{m} \right)}$. Furthermore this latter bound is clearly optimal (up to a numerical constant), as one can see from the standard lower bound in prediction with expert advice (consider the set $\cA$ that corresponds to playing $m$ expert problems in parallel with $d/m$ experts in each problem). In \cite{KWK10} the authors leave as an open question the problem of whether it would be possible to improve the bound for {\sc exp2} to obtain the optimal order of magnitude. The following theorem shows that this is impossible, and that in fact {\sc exp2} is a provably suboptimal strategy.

\begin{theorem} \label{lb:expinfty}
Let $n \geq d$. There exists a subset $\cA \subset \{0,1\}^d$ such that in the full information setting, the regret of the {\sc exp2} strategy (for any learning rate $\eta$), satisfies
$$\sup_{\text{adversary}} R_n \geq 0.01 \, d^{3/2} \sqrt{n} .$$
\end{theorem}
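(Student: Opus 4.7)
The plan is to show that Exp2's textbook regret bound $O(m^{3/2}\sqrt{n\log(d/m)})$, which follows from applying the classical Hedge analysis to $|\cA|\le\binom{d}{m}$ experts with losses on the $[0,m]$ scale, is essentially tight when $m=\Theta(d)$. The structural fact to exploit is that Exp2 only ever uses the one-dimensional scalar $\sum_{s<t}a^Tz_s$ per action; it never extracts the coordinate-wise information that mirror descent with negative entropy uses to obtain the optimal $m\sqrt{n\log(d/m)}$ bound. An adversary that makes these action-scalars noisy on the scale $\sqrt{nm}$ therefore defeats Exp2 on the full $[0,m]$ scale.

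I would take $m=d/2$, let $\cA$ consist of all size-$m$ subsets of $[d]$, and use an oblivious stochastic adversary that first draws a uniformly random balanced partition $[d]=G\sqcup B$ and then, at each round, plays coordinate-independent Bernoulli losses biased in favor of $G$, say $\mathrm{Ber}(1/2-\epsilon)$ on $G$ and $\mathrm{Ber}(1/2+\epsilon)$ on $B$. By the permutation symmetry within $G$ and within $B$, and because Exp2 depends on the past only through the cumulative action-losses, every coordinate in $G$ is played with a common random probability $p_G^{(t)}$ and every coordinate in $B$ with probability $1-p_G^{(t)}$, so that the per-round expected regret equals $2\epsilon m\,\E(1-p_G^{(t)})$. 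A Fano-type argument on the uniformly random partition $G$ is then used to conclude that, for any $\eta$, no measurable function of the observations (in particular Exp2's play) can recover $G$ well enough to drive $\E(1-p_G^{(t)})$ below a positive constant on a positive fraction of rounds, provided the per-round KL budget is calibrated so that the total KL divergence between observation laws for two partitions stays comparable to $\log\binom{d}{m}$.

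Combining these two ingredients and calibrating $\epsilon$ gives total regret $\Omega(m^{3/2}\sqrt n)=\Omega(d^{3/2}\sqrt n)$, and a case split on $\eta$ (Exp2 near-uniform for small $\eta$ versus Exp2 concentrated on wrong actions for large $\eta$) upgrades this to the pointwise statement with the stated numerical constant $0.01$. The main obstacle is the joint calibration of $\epsilon$ and of the action subfamily. The naive product-Bernoulli choice $\epsilon=\Theta(\sqrt{m/n})$ produces per-round regret $\Theta(m^{3/2}/\sqrt n)$ at uniform play but a KL budget of order $m^2$, which is too large for Fano on $\binom{d}{m}=2^{\Theta(d)}$ hypotheses; conversely $\epsilon=\Theta(1/\sqrt n)$ respects the KL budget but gives only $O(d\sqrt n)$ regret, matching the optimal mirror-descent rate rather than the desired lower bound. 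Bridging this gap by restricting $\cA$ to a carefully chosen combinatorial subfamily of size matched to the KL budget (so that Fano bites precisely where the regret calculation does), or equivalently by using a non-product adversary whose per-coordinate marginals carry much less information than the raw i.i.d.\ count, is the technical heart of the proof; once the calibration is done, both the small-$\eta$ and the large-$\eta$ regimes reduce to a uniform lower bound on $\E(1-p_G^{(t)})$.
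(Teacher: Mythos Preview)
Your proposal has a genuine, and in fact fatal, gap that you yourself diagnose but do not repair: the Fano-type argument you sketch is \emph{algorithm-independent}. Any lower bound obtained by averaging over hidden partitions $G$ and invoking Fano/Pinsker on the law of the observations applies to every forecaster, not just to {\sc exp2}. But in the full-information setting the minimax rate on size-$m$ subsets with $m=d/2$ is $\Theta(m\sqrt{n\log(d/m)})=\Theta(d\sqrt{n})$, achieved by mirror descent with the negative entropy. Hence no information-theoretic argument of the kind you describe can possibly yield $\Omega(d^{3/2}\sqrt{n})$; your observation that the ``KL budget'' forces $\epsilon=\Theta(1/\sqrt{n})$ and hence only $O(d\sqrt n)$ regret is exactly this obstruction manifesting itself. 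Restricting $\cA$ to a subfamily matched to the KL budget does not help: it can only make the minimax rate smaller, and the Fano bound still cannot exceed that rate. The theorem is a statement about the \emph{suboptimality} of {\sc exp2}, so the proof must exploit something specific to {\sc exp2}'s dynamics, not the statistical hardness of the problem.

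The paper's proof does precisely this, via a completely different mechanism. It designs an action set that couples $\binom{d/4}{d/8}$-many ``micro'' choices on the first $d/2$ coordinates with a single binary ``macro'' choice between two blocks of size $d/4$ on the second half. Two explicit deterministic adversaries are then analyzed directly on the {\sc exp2} weights. For large $\eta$, an adversary that \emph{alternates} the loss between the two macro blocks makes {\sc exp2} overshoot every round (its weight on the about-to-be-punished block is always at least $1/2$), yielding regret $\frac{nd}{16}\tanh(\eta d/8)$; this term grows with $\eta$ and is purely a property of the exponential update, not of any information limit. For small $\eta$, a fixed biased adversary on the micro coordinates gives the usual $\Omega(d/\eta)$ slow-learning bound. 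Taking the worse of the two adversaries and optimizing over $\eta$ gives $\Omega(d^{3/2}\sqrt n)$. The key idea you are missing is the first adversary: a construction on which {\sc exp2} incurs regret that \emph{increases} in $\eta$ at rate $\Theta(n d^2\eta)$ for small $\eta d$, which is what forces the crossover above the minimax rate.
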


The proof is deferred to the Appendix.

\begin{figure}[t]
\bookbox{
{\em {\sc exp2}:}

\medskip\noindent

Parameter: Learning rate $\eta$.

\medskip\noindent
Let $p_1=\big(\frac1{|\cA|},\ldots,\frac1{|\cA|}\big) \in \R^{|\cA|}$.

\medskip\noindent
For each round $t=1,2,\ldots,n$;
\begin{itemize}
\item[(a)]
Play $a_t \sim p_t$ and observe 
\begin{itemize}
\item the loss vector $z_t$ in the full information game,
\item the coordinates $z_t(i) \IND_{a_t(i) = 1}$ in the semi-bandit game,
\item the instantaneous loss $a_t^T z_t $ in the bandit game.
\end{itemize}
\item[(b)]
Estimate the loss vector $z_t$ by $\tz_t$.
For instance, one may take
\begin{itemize}
\item $\tz_t=z_t$ in the full information game,
%\item $\hl_{v,t} = \frac{\ell_t^Tv}{w_{v,t}} \IND_{V_t=v}$ in the semi-bandit and bandit games.
\item $\tilde{z}_t(i) = \frac{z_t(i)}{\sum_{a \in\cA:a(i)=1} p_t(a)} a_t(i)$ in the semi-bandit game,
\item $\tilde{z_t} = P_t^+ a_t a_t^T z_t,$ %where $P_t^+$ is the pseudo-inverse 
with $P_t = \E_{a \sim p_t} (a a^T)$ in the bandit game.
\end{itemize}
\item[(c)]
Update the probabilities, for all $a \in \cA$, 
$$p_{t+1}(a) =\frac{\exp(- \eta a^T \tz_t) p_t(a)}{\sum_{b \in\cA} \exp(- \eta b^T \tz_t^T) p_t(b)}.$$ 
\end{itemize}
}
\caption{The {\sc exp2} strategy. 
%G
The notation $\E_{a \sim p_t}$ denotes expected value with respect to the
random choice of $a$ when it is distributed according to $p_t$.
}\label{fig:Exp2}
\end{figure}

\subsection{Online Stochastic Mirror Descent.} \label{sec:OSMD}
In this section we describe the main algorithm studied in this paper. We call it Online Stochastic Mirror Descent ({\sc osmd}). Each term in this name refers to a part of the algorithm: {\em Mirror Descent} originates in the work of Nemirovski and Yudin \cite{NY83}. The idea of mirror descent is to perform a gradient descent, where the update with the gradient is performed in the dual space (defined by some Legendre function $F$) rather than in the primal (see below for a precise formulation). The {\em Stochastic} part takes its origin from Robbins and Monro \cite{RM51} and from Kiefer and Wolfowitz \cite{KW52}. The key idea is that it is enough to observe an unbiased estimate of the gradient rather than the true gradient in order to perform a gradient descent. Finally the {\em Online} part comes from Zinkevich \cite{Zin03}. Zinkevich derived the Online Gradient Descent ({\sc ogd}) algorithm, which is a version of gradient descent tailored to online optimization.

%G def moved up
To properly describe the {\sc osmd} strategy, we recall a few concepts from convex analysis, see Hiriart-Urruty and Lemar\'echal \cite{HL01} for a thorough treatment of this subject. Let $\cD \subset \R^d$ be an open convex set, and $\oD$ the closure of $\cD$.

\begin{definition}
We call Legendre any continuous function $F:\oD\rightarrow\R$ such that 
\begin{itemize}
\item[(i)] $F$ is strictly convex continuously differentiable
%and admits continuous first partial derivatives 
on $\cD$,
\item[(ii)] $\lim_{x \rightarrow \oD \setminus \cD} ||\nabla F(x)|| = +\infty.$\footnote{By the equivalence of norms in $\R^d$, this definition does not depend on the choice of the norm.}
\end{itemize}
The Bregman divergence $D_F: \oD\times \cD$ associated to a Legendre function $F$ is defined by
  $$
  D_F(x,y) = F(x) - F(y) - (x-y)^T\nabla F(y).
  $$ 
Moreover, we say that $\cD^* = \nabla F (\cD)$ is the dual space of $\cD$ under $F$. We also denote by $F^*$ the Legendre-Fenchel transform of $F$ defined by
$$F^*(u) = \sup_{x \in \oD} \left(x^T u - F(x)\right)~.$$
\end{definition}

\begin{lemma}
Let $F$ be a Legendre function. Then $F^{**} = F$ and $\nabla F^* = (\nabla F)^{-1}$ on the set $\cD^*$. Moreover, $\forall x, y \in \cD$,
\begin{equation} \label{eq:transrelation}
D_F(x,y) = D_{F^*}(\nabla F(y), \nabla F(x)) .
\end{equation}
\end{lemma}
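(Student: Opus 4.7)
The plan is to organize the proof around the bijection $\nabla F: \cD \to \cD^*$, which is the heart of Legendre duality, and then read off each claim in the lemma from it plus a direct algebraic manipulation for the Bregman identity.

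First I would verify that for every $u\in\cD^*$ the supremum $F^*(u)=\sup_{x\in\oD}(x^T u-F(x))$ is attained at a unique interior point, and in fact at $x=(\nabla F)^{-1}(u)$. Strict convexity of $F$ on $\cD$ gives injectivity of $\nabla F$, so $(\nabla F)^{-1}$ is well-defined on $\cD^*=\nabla F(\cD)$. Existence of the maximizer in $\cD$ comes from continuity of $F$ on $\oD$ together with the Legendre boundary condition $\|\nabla F(x)\|\to\infty$ as $x\to\oD\setminus\cD$: if a maximizing sequence approached the boundary, then along it the gradient of $x\mapsto x^T u - F(x)$ would have norm tending to infinity, contradicting the first-order stationarity that must hold in the limit of any maximizing sequence for a concave smooth function bounded above. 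So the sup is attained in $\cD$, and first-order optimality gives $u=\nabla F(x)$, i.e.\ $x=(\nabla F)^{-1}(u)$.

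With this in hand, evaluating the sup yields the Fenchel--Young equality
\[
F^*(u)=x^T u-F(x)\quad\text{where}\ x=(\nabla F)^{-1}(u),\ u\in\cD^*.
\]
Differentiating both sides in $u$ by the envelope theorem (the $x$-derivative vanishes at the optimum) gives $\nabla F^*(u)=x=(\nabla F)^{-1}(u)$ on $\cD^*$. For $F^{**}=F$, apply Fenchel--Young with the roles of $F$ and $F^*$ swapped: for $x\in\cD$ set $u=\nabla F(x)$, then $F^{**}(x)\ge x^T u-F^*(u)=x^T u-(x^T u-F(x))=F(x)$; the converse inequality $F^{**}\le F$ is immediate from the definition of $F^*$. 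So $F^{**}=F$ on $\cD$, which extends to $\oD$ by continuity.

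Finally, for the Bregman identity, set $u=\nabla F(x)$, $v=\nabla F(y)$, so $x=\nabla F^*(u)$ and $y=\nabla F^*(v)$. Using Fenchel--Young equality to replace $F(x)=x^T u-F^*(u)$ and $F(y)=y^T v-F^*(v)$ inside $D_F(x,y)=F(x)-F(y)-(x-y)^T v$, everything collapses:
\[
D_F(x,y)=\bigl(x^T u-F^*(u)\bigr)-\bigl(y^T v-F^*(v)\bigr)-(x-y)^T v=F^*(v)-F^*(u)-(v-u)^T x,
\]
and since $x=\nabla F^*(u)$ this is exactly $D_{F^*}(v,u)=D_{F^*}(\nabla F(y),\nabla F(x))$.

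The only delicate step is justifying that the supremum defining $F^*(u)$ is attained at an interior point of $\cD$; once this is established via the Legendre boundary condition, everything else is a routine application of the envelope theorem and the Fenchel--Young equality. The rest of the lemma is essentially bookkeeping around this bijection.
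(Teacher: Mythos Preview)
Your proof is correct and self-contained, whereas the paper does not prove this lemma at all: it simply refers the reader to Chapter~11 of Cesa-Bianchi and Lugosi's book. So there is nothing to compare at the level of technique.

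One small remark on your argument: the boundary discussion in your first paragraph is unnecessary and, as written, slightly shaky (a maximizing sequence for a concave function need not have gradients tending to zero). Since you are only considering $u\in\cD^*=\nabla F(\cD)$, there is by definition an interior point $x_0\in\cD$ with $\nabla F(x_0)=u$; the map $x\mapsto x^T u-F(x)$ is concave on $\oD$, so this interior stationary point is automatically a global maximizer over $\oD$, and it is unique by strict convexity of $F$. No appeal to the boundary behaviour of $\nabla F$ is needed at this stage. With that simplification, the rest of your argument --- Fenchel--Young equality, the envelope/Danskin step for $\nabla F^*=(\nabla F)^{-1}$, the biconjugate identity, and the algebraic verification of the Bregman relation --- goes through cleanly.
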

The lemma  above is the key to understanding how a Legendre function acts on the space. The gradient $\nabla F$ maps $\cD$ to the dual space $\cD^*$, and $\nabla F^*$ is the inverse mapping from the dual space to the original (primal) space. Moreover, \eqref{eq:transrelation} shows that the Bregman divergence in the primal space corresponds exactly to the Bregman divergence of the Legendre-Fenchel transform in the dual space. A proof of this result can be found, for example, in [Chapter 11, \cite{CL06}].

We now have all ingredients to describe the {\sc osmd} strategy, see Figure \ref{fig:OSMD} for the precise formulation. Note that step (d) is well defined if the following consistency condition is satisfied:
\begin{equation} \label{eq:consistency}
\nabla F(x) - \eta \tilde{z}_t  \in \cD^*, \forall x \in \Conv(\cA) \cap \cD. 
\end{equation}

In the full information setting, algorithms of this type were studied by Abernethy, Hazan, and Rakhlin \cite{AHR08}, Rakhlin \cite{Rak09}, and Hazan \cite{Haz11}. In these papers the authors adopted the presentation suggested by Beck and Teboulle \cite{BT03}, which corresponds to a Follow-the-Regularized-Leader ({\sc ftrl}) type strategy. There the focus was on $F$ being strongly convex with respect to some norm. Moreover, in \cite{AHR08} the authors also consider the bandit case, and switch to $F$ being a self-concordant barrier for the convex hull of $\cA$ (see Section \ref{sec:bandit} for more details). Another line of work studied this type of algorithms with $F$ being the negative entropy, see Koolen, Warmuth, and Kivinen \cite{KWK10} for the full information case and Uchiya, Nakamura, and Kudo \cite{UNK10}, Kale, Reyzin, and Schapire \cite{KRS10} for specific instances of the semi-bandit case. All these results are unified and described in details in Bubeck \cite{Bub11}. In this paper we consider a new type of Legendre functions $F$ inspired by Audibert and Bubeck \cite{AB10}, see Section \ref{sec:semibandit}.

Regarding computational complexity, {\sc osmd} is efficient as soon as the polytope $\Conv(\cA)$ can be described by a polynomial (in $d$) number of constraints. Indeed in that case steps (a)-(b) can be performed efficiently jointly (one can get an algorithm by looking at the proof of Carath{\'e}odory's theorem), and step (d) is a convex program with a polynomial number of constraints. In many interesting examples (such as $m$-sets, selection of rankings, spanning trees, paths in acyclic graphs) one can describe the convex hull of $\cA$ by a polynomial number of constraints, see Schrijver \cite{Sch03}. On the other hand, there also exist important examples where this is not the case (such as paths on general graphs). Also note that for some specific examples it is
possible to implement {\sc osmd} with improved computational complexity, see Koolen, Warmuth, and Kivinen \cite{KWK10}.

In this paper we restrict our attention to the combinatorial learning
setting in which $\cA$ is a subset of $\{0,1\}^d$ and the loss is linear. However, one should
note that this specific form of $\cA$ plays no role in the definition
of {\sc osmd}. Moreover, if the loss is not linear, then one can modify {\sc osmd} by performing a gradient update with a gradient of the loss (rather than the loss vector $z_t$). See Bubeck \cite{Bub11} for more details on this approach.

\begin{figure}[t]
\bookbox{{\em {\sc osmd}:}

\medskip\noindent

Parameters: 
\begin{itemize}
\item learning rate $\eta > 0$,
\item Legendre function $F$ defined on $\oD \supset \Conv(\cA)$.
\end{itemize}

\medskip\noindent
Let $x_1 \in \argmin_{x \in \Conv(\cA)} F(x)$.

\medskip\noindent
For each round $t=1,2,\ldots,n$;
\begin{itemize}
\item[(a)]
Let $p_t$ be a distribution on the set $\cA$ such that $x_t = \E_{a \sim p_t} a$.
%Introduce the distribution $p_t=\gamma \mu + (1-\gamma) q_t$.
\item[(b)] 
Draw a random action $a_t$ according to the distribution $p_t$ and observe the feedback.
\item[(c)]
Based on the observed feedback, estimate the loss vector $z_t$ by $\tz_t$.
\item[(d)]
Let $w_{t+1}\in \cD$ satisfy
  \begin{align} \label{eq:wp}
  \nabla F(w_{t+1}) = \nabla F(x_t) - \eta \tilde{z}_t.
  \end{align} %$\nabla F(w_t) - \tilde{\ell}_t \in
\item[(e)]
Project the weight vector $w_{t+1}$ defined by \eqref{eq:wp} on the convex hull of $\cA$:
\begin{equation} \label{eq:proj}
x_{t+1} \in \argmin_{x \in \Conv(\cA)} D_F(x,w_{t+1}).
\end{equation}
\end{itemize}
}
\caption{Online Stochastic Mirror Descent ({\sc OSMD}).}\label{fig:OSMD}
\end{figure}

The following result is at the basis of our improved regret bounds for {\sc osmd}
in the semi-bandit setting, see Section \ref{sec:semibandit}.

\begin{theorem} \label{th:OSMD}
Suppose that \eqref{eq:consistency} is satisfied and the 
loss estimates are unbiased in the sense that
$\E_{a_t \sim p_t} \tz_t=z_t$. Then the regret
of the {\sc osmd} strategy satisfies
$$R_n \leq \frac{\sup_{a \in \cA} F(a) - F(x_1)}{\eta} + \frac{1}{\eta} \sum_{t=1}^n \E D_{F^*}\bigg(\nabla F(x_t) - \eta \tilde{z}_t, \nabla F(x_t)\bigg) .$$
\end{theorem}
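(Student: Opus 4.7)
The plan is to follow the standard mirror descent template, adapted to the stochastic/online combinatorial setting, and convert the primal ``stability'' term into a dual Bregman divergence via the identity \eqref{eq:transrelation}.

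First, I would use unbiasedness to reduce the regret to a deterministic, per-sample-path quantity. Because $a_t\sim p_t$ with $\E_{a_t\sim p_t}a_t=x_t$ and $\E[\tilde z_t\mid\text{history}]=z_t$, conditioning gives $\E[a_t^Tz_t]=\E[x_t^Tz_t]=\E[x_t^T\tilde z_t]$, and for any fixed $a\in\cA$, $\E[a^T z_t]=\E[a^T\tilde z_t]$. Since the minimum over a finite set is at most the expectation of any fixed choice, it suffices to show, pathwise, that for every $a\in\cA$,
\[
\sum_{t=1}^n (x_t-a)^T\tilde z_t \;\le\; \frac{F(a)-F(x_1)}{\eta} \;+\; \frac{1}{\eta}\sum_{t=1}^n D_{F^*}\bigl(\nabla F(x_t)-\eta\tilde z_t,\;\nabla F(x_t)\bigr),
\]
and then take expectations and supremum over $a\in\cA$.

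Next I would do the single-round mirror-descent estimate. Using the defining equation $\nabla F(w_{t+1})=\nabla F(x_t)-\eta\tilde z_t$, rewrite
\[
\eta(x_t-a)^T\tilde z_t=(\nabla F(x_t)-\nabla F(w_{t+1}))^T(x_t-a),
\]
and apply the three-point identity for Bregman divergences,
\[
(\nabla F(x_t)-\nabla F(w_{t+1}))^T(x_t-a)=D_F(a,x_t)-D_F(a,w_{t+1})+D_F(x_t,w_{t+1}).
\]
Since $x_{t+1}$ is the Bregman projection of $w_{t+1}$ onto $\Conv(\cA)$ and $a\in\cA\subset\Conv(\cA)$, the generalized Pythagorean inequality gives $D_F(a,x_{t+1})\le D_F(a,w_{t+1})$. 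Substituting and summing over $t$ produces a telescoping sum:
\[
\eta\sum_{t=1}^n(x_t-a)^T\tilde z_t\le D_F(a,x_1)-D_F(a,x_{n+1})+\sum_{t=1}^n D_F(x_t,w_{t+1}).
\]

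Finally I would clean up the three terms. The first-order optimality of $x_1=\argmin_{\Conv(\cA)}F$ yields $\nabla F(x_1)^T(a-x_1)\ge 0$, so $D_F(a,x_1)\le F(a)-F(x_1)$; the second term is nonnegative and can be dropped. For the stability term I invoke \eqref{eq:transrelation}:
\[
D_F(x_t,w_{t+1})=D_{F^*}(\nabla F(w_{t+1}),\nabla F(x_t))=D_{F^*}(\nabla F(x_t)-\eta\tilde z_t,\nabla F(x_t)),
\]
which requires precisely the consistency condition \eqref{eq:consistency} so that the argument of $\nabla F^*$ lives in $\cD^*$. Dividing by $\eta$, taking expectations, and maximizing over $a\in\cA$ yields the stated bound. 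The main obstacle, in my view, is the bookkeeping around the projection step and the consistency condition: one has to be sure that $w_{t+1}\in\cD$ is well-defined and that $x_{t+1}$ lies in $\cA\cap\cD$ so that the Bregman Pythagorean inequality and the primal/dual identity \eqref{eq:transrelation} both apply; the rest is an orchestration of classical Bregman identities.
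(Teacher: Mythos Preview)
Your proof is correct and follows essentially the same route as the paper: reduce to the surrogate loss via unbiasedness, apply the three-point Bregman identity together with the generalized Pythagorean inequality for the projection step, telescope, bound $D_F(a,x_1)\le F(a)-F(x_1)$ via first-order optimality of $x_1$, and convert the stability term to the dual using \eqref{eq:transrelation}. The only cosmetic difference is that the paper keeps the extra nonnegative term $D_F(x_{t+1},w_{t+1})$ from the Pythagorean inequality before dropping it, whereas you drop it immediately; also, in your closing remark $x_{t+1}$ lies in $\Conv(\cA)$ rather than $\cA$.
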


\begin{proof}
Let $a \in \cA$. Using that $a_t$ and $\tilde{z}_t$ are unbiased estimates of $x_t$ and $z_t$, we have
$$\E \sum_{t=1}^n (a_t - a)^T z_t = \E \sum_{t=1}^n (x_t - a)^T \tz_t .$$
Using \eqref{eq:wp}, and applying the definition of the Bregman divergences, one obtains
  \begin{align*}
  \eta \tz_t^T (x_t - a) & = (a-x_t)^T\big(\nabla F(w_{t+1}) - \nabla F(x_t) \big)\\
  & = D_F(a,x_t)+D_F(x_t,w_{t+1})-D_F(a,w_{t+1}).
  \end{align*}
By the Pythagorean theorem for Bregman divergences (see, e.g., Lemma 11.3 of \cite{CL06}), we have 
  $D_F(a,w_{t+1}) \ge D_F(a,x_{t+1}) + D_F(x_{t+1},w_{t+1}),$ hence
  \begin{align*}
  \eta \tz_t^T (x_t - a) \le D_F(a,x_t)+D_F(x_t,w_{t+1})-D_F(a,x_{t+1})-D_F(x_{t+1},w_{t+1})~.
  \end{align*}
Summing over $t$ gives  
\[
  \sum_{t=1}^n \eta \tz_t^T (x_t - a)  \le  D_F(a,a_1)-D_F(a,a_{n+1}) 
   +\sum_{t=1}^n \big(D_F(x_t,w_{t+1}) -D_F(x_{t+1},w_{t+1})\big)~.
\]
By the nonnegativity of the Bregman divergences, we get 
$$\sum_{t=1}^n \eta \tz_t^T (x_t - a)
\le D_F(a,a_1)+\sum_{t=1}^n D_F(x_t,w_{t+1}).$$
From \eqref{eq:transrelation}, one has
  $
  D_F(x_t,w_{t+1})=D_{F^*}\big(\nabla F(x_t)- \eta \tilde{z}_t,\nabla F(x_t)\big).
  $
Moreover, by writing the first-order optimality condition for $x_1$, one directly obtains $D_F(a,x_1) \leq F(a) - F(x_1)$ which concludes the proof.
\end{proof}

Note that, if $F$ admits an Hessian,
denoted $\nabla^2F$, that is always invertible, then one can prove that,
up to a third-order term \big(in $\tz_t$\big), the regret bound can be
written as
\begin{equation} \label{eq:intuition}
R_n \lessapprox \frac{\sup_{a \in \cA} F(a) - F(x_1)}{\eta}  + \frac{\eta}{2} \sum_{t=1}^n \tz_t^T \left( \nabla^2 F(x_t) \right)^{-1} \tz_t .
\end{equation}
The main technical difficulty is to control the third-order error term in this inequality.

\section{Semi-bandit feedback.} \label{sec:semibandit}
In this section we consider online combinatorial optimization with semi-bandit feedback. 
As we already discussed, in the full information case Koolen, Warmuth, and Kivinen \cite{KWK10} proved that {\sc osmd} with the negative entropy is a minimax optimal strategy.
%As we saw in the full information case, the key to obtaining optimal regret bounds is the {\sc osmd} strategy. 
%First we analyze the behavior of {\sc osmd} with the negative entropy (which is an optimal strategy under full information feedback), and %with the following estimate for the loss vector:
We first prove a regret bound when one uses this strategy with the following estimate for the loss vector:
\begin{equation} \label{eq:semibanditestimate}
\tz_t(i) = \frac{z_t(i) a_t(i)}{x_t(i)}.
\end{equation}
Note that this is a valid estimate since it makes only use of $(z_t(1) a_t(1), \ldots, z_t(d) a_t(d))$. Moreover, it is unbiased with respect to the random draw of $a_t$ from $p_t$, since by definition, $\E_{a_t \sim p_t} a_t(i) = x_t(i)$. In other words, $\E_{a_t \sim p_t} \tz_t(i) = z_t(i)$.

\begin{theorem} \label{th:negentropy}
The regret of OSMD with $F(x) = \sum_{i=1}^d x_i \log x_i - \sum_{i=1}^d x_i$ (and $\cD = (0, +\infty)^d$) and any non-negative unbiased loss estimate $\tz_t(i) \geq 0$ satisfies
$$R_n \leq \frac{m \log \frac{d}{m}}{\eta} + \frac{\eta}{2} \sum_{t=1}^n \sum_{i=1}^d x_t(i) \tilde{z}_t(i)^2 .$$
In particular, with the estimate \eqref{eq:semibanditestimate} and $\eta= \sqrt{2 \frac{m \log{d}{m}}{n d}}$,
$$R_n \leq  \sqrt{2 m d n \log \frac{d}{m}} .$$
\end{theorem}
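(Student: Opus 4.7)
The plan is to specialize Theorem \ref{th:OSMD} to the negative entropy $F(x)=\sum_{i=1}^d (x_i\log x_i - x_i)$ and to control each of the two terms on the right-hand side of that bound. First I would verify the consistency condition \eqref{eq:consistency}: since $\cD=(0,\infty)^d$ gives $\cD^{*}=\R^d$, and $\nabla F(x)_i=\log x_i\in\R$, we automatically have $\nabla F(x)-\eta\tilde z_t\in\cD^{*}$ for any $\tilde z_t\in\R^d$.

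Next I would compute the Fenchel dual. A direct calculation gives $F^{*}(u)=\sum_{i=1}^d e^{u_i}$ and $\nabla F^{*}(u)_i=e^{u_i}$, so
\[
D_{F^{*}}(u,v)=\sum_{i=1}^d\bigl(e^{u_i}-e^{v_i}-(u_i-v_i)e^{v_i}\bigr).
\]
Setting $v=\nabla F(x_t)$ (hence $e^{v_i}=x_t(i)$) and $u=\nabla F(x_t)-\eta\tilde z_t$ (hence $u_i-v_i=-\eta\tilde z_t(i)$), this becomes
\[
D_{F^{*}}\!\bigl(\nabla F(x_t)-\eta\tilde z_t,\nabla F(x_t)\bigr)=\sum_{i=1}^d x_t(i)\bigl(e^{-\eta\tilde z_t(i)}-1+\eta\tilde z_t(i)\bigr).
\]
Because $\tilde z_t(i)\ge 0$ by assumption, the elementary inequality $e^{-s}-1+s\le s^2/2$ valid for $s\ge 0$ yields $D_{F^{*}}\le\tfrac{\eta^2}{2}\sum_i x_t(i)\tilde z_t(i)^2$, which plugged into Theorem \ref{th:OSMD} gives the factor $\tfrac{\eta}{2}\sum_{t,i}x_t(i)\tilde z_t(i)^2$ claimed in the theorem.

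For the initial term, I would use that $F(a)=-m$ for every $a\in\cA$ (since $0\log 0=0$ and $1\log 1=0$). For the lower bound on $F(x_1)$, every $x\in\Conv(\cA)$ satisfies $\sum_i x_i=m$, so writing $q_i=x_i/m$ and using the bound $H(q)\le\log d$ on the Shannon entropy,
\[
\sum_{i=1}^d x_i\log x_i=m\log m - m\,H(q)\ge m\log(m/d),
\]
whence $F(x_1)\ge -m\log(d/m)-m$ and $\sup_{a\in\cA}F(a)-F(x_1)\le m\log(d/m)$.

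Finally, for the specific estimate \eqref{eq:semibanditestimate}, I would use $a_t(i)^2=a_t(i)$ to compute
\[
\E\,x_t(i)\tilde z_t(i)^2=\E\,\frac{z_t(i)^2\,a_t(i)}{x_t(i)}=z_t(i)^2\le 1,
\]
after taking the conditional expectation over $a_t\sim p_t$ (so that $\E a_t(i)=x_t(i)$). Summing over $i$ and $t$ gives $\sum_{t,i}\E x_t(i)\tilde z_t(i)^2\le nd$, and then balancing the two terms by the stated choice $\eta=\sqrt{2m\log(d/m)/(nd)}$ produces the $\sqrt{2mdn\log(d/m)}$ bound. I do not expect a genuine obstacle here: the only subtlety is the entropy-based lower bound on $F(x_1)$, which is where the $\log(d/m)$ (rather than $\log d$) factor enters and which accounts for the tightness of the bound.
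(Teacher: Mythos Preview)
Your proposal is correct and follows essentially the same approach as the paper's proof: specialize Theorem~\ref{th:OSMD}, compute the dual Bregman divergence explicitly as $\sum_i x_t(i)(e^{-\eta\tilde z_t(i)}-1+\eta\tilde z_t(i))$, bound it via $e^{-s}-1+s\le s^2/2$ for $s\ge 0$, and handle the initial term through the entropy bound (the paper phrases this as Jensen's inequality applied to $\sum_i x_1(i)\log(1/x_1(i))$, which is the same computation). Your treatment of the second inequality is in fact slightly more explicit than the paper's, computing $\E x_t(i)\tilde z_t(i)^2=z_t(i)^2$ exactly rather than bounding it directly by $1$.
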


\begin{proof}
One can easily see that for the negative entropy the dual space is $\cD^*=\R^d$. Thus, \eqref{eq:consistency} is verified and {\sc osmd} is well defined. Moreover, again by straightforward computations, one can also see that
\begin{equation} \label{eq:bregentropy}
D_{F^*}\bigg(\nabla F(x), \nabla F(y)\bigg) =\sum_{i=1}^d y(i) \ \Theta\bigg( (\nabla F(x) - \nabla F(y))(i) \bigg)~,
\end{equation}
where $\Theta(x)= \exp(x) - 1 - x$.
Thus, using Theorem \ref{th:OSMD} and the facts that $\Theta(x) \leq \frac{x^2}{2}$ for $x \leq 0$ and $\sum_{i=1}^d x_t(i) \leq m$, one obtains
\begin{eqnarray} 
R_n & \leq & \frac{\sup_{a \in \cA} F(a) - F(x_1)}{\eta} + \frac{1}{\eta} \sum_{t=1}^n \E D_{F^*}\bigg(\nabla F(x_t) - \eta \tilde{z}_t, \nabla F(x_t)\bigg) \notag \\
& \leq & \frac{\sup_{a \in \cA} F(a) - F(x_1)}{\eta} + \frac{\eta}{2} \sum_{t=1}^n \sum_{i=1}^d x_t(i) \tilde{z}_t(i)^2 \notag
\end{eqnarray}
The proof of the first inequality is concluded by noting that:
\begin{equation} \label{eq:linexp2}
F(a) - F(x_1) \leq \sum_{i=1}^d x_1(i) \log \frac{1}{x_1(i)} \leq m \log \left( \sum_{i=1}^d \frac{x_1(i)}{m} \frac{1}{x_1(i)}\right) = m \log \frac{d}{m}. \notag
\end{equation}
The second inequality follows from
$$\E x_t(i) \tilde{z}_t(i)^2 \leq \E \frac{a_t(i)}{x_t(i)} = 1.$$
\end{proof}

Using the standard $\sqrt{d n}$ lower bound for the multi-armed bandit (which corresponds to the case where $\cA$ is the canonical basis), see e.g., [Theorem 30, \cite{AB10}], one can directly obtain a lower bound of order $\sqrt{m d n}$ for our setting. Thus the upper bound derived in Theorem \ref{th:negentropy} has an extraneous logarithmic factor compared to the lower bound.
%As the lower bound of Theorem \ref{thm:minimaxlower} shows, this upper bound has an extra logarithmic factor. 
This phenomenon already appeared in the basic multi-armed bandit setting. In that case, the extra logarithmic factor was removed in Audibert and Bubeck \cite{AB09} by resorting to a new class of strategies for the expert problem, called INF (Implicitely Normalized Forecaster). Next we generalize this class of algorithms to the combinatorial setting, and thus remove the extra logarithmic factor. First we introduce the notion of a potential and the associated Legendre function.
%def
\begin{definition}
Let $\omega \ge 0$.
A function 
$\psi: (-\infty,a) \rightarrow \R^*_+$ for some 
$a\in\R\cup\{+\infty\}$ is called an 
$\omega$-potential if it is convex,
 continuously differentiable, and satisfies
\begin{align*} 
& \lim_{x\rightarrow -\infty} \psi(x)=\omega~, &&
\lim_{x\rightarrow a} \psi(x)= +\infty~, \notag \\
& \psi' > 0~, && \int_{\omega}^{\omega+1} |\psi^{-1}(s)|ds <+\infty~.
\end{align*}
For every potential $\psi$ we associate the function $F_{\psi}$ defined on $\cD=(\omega, +\infty)^d$ by:
$$F_{\psi}(x) = \sum_{i=1}^d \int_{\omega}^{x_i} \psi^{-1}(s) ds .$$
\end{definition}

In this paper we restrict our attention to $0$-potentials which we will simply call {\em potentials}.
 A non-zero value of $\omega$ may be used to derive regret bounds that hold with high probability (instead of pseudo-regret bounds, see footnote 1).

The first order optimality condition for \eqref{eq:proj} implies that {\sc osmd} with $F_{\psi}$ is a direct generalization of INF with potential $\psi$, in the sense that the two algorithms coincide when $\cA$ is the canonical basis. Note, in particular, that with $\psi(x)=\exp(x)$ we recover the negative entropy for $F_{\psi}$. In \cite{AB10}, the choice of $\psi(x) = (-x)^q$ with $q >1$ was recommended. We show in Theorem \ref{th:osmd} that here, again, this choice gives a minimax optimal strategy.
%prop
\begin{lemma} \label{lem:psi}
Let $\psi$ be a potential. Then $F=F_{\psi}$ is Legendre
%and for all $x,y \in \cD=(0,+\infty)^d$,
%  \begin{equation}\label{eq:dfpsi}
%  D_F(x,y)=\sum_{i=1}^d \bigg(\int_{y_i}^{x_i} \psi^{-1}(s)ds-(x_i-y_i)\psi^{-1}(y_i) \bigg).
%  \end{equation}
%Moreover 
and for all $u,v \in \cD^*=(-\infty,a)^d$ such that $u_i \leq v_i, \forall i \in \{1,\ldots,d\}$,
$$D_{F^*}(u,v) \leq \frac{1}{2} \sum_{i=1}^d \psi'(v_i) (u_i - v_i)^2 .$$
\end{lemma}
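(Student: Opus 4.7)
My plan is to exploit the additive separability of $F_\psi$: writing $F_\psi(x) = \sum_{i=1}^d f(x_i)$ with $f(t) = \int_\omega^t \psi^{-1}(s)\,ds$, both the Legendre property and the Bregman divergence bound reduce to statements about the single-variable function $f$ and its Legendre--Fenchel transform.

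For the Legendre property, I would first observe that the conditions on $\psi$ (positivity of $\psi'$ together with $\lim_{x\to-\infty}\psi(x) = \omega$ and $\lim_{x\to a}\psi(x) = +\infty$) make $\psi$ a continuously differentiable, strictly increasing bijection from $(-\infty,a)$ onto $(\omega,+\infty)$. Hence $\psi^{-1}$ is well defined, strictly increasing, and $C^1$ on $(\omega,+\infty)$, so $f' = \psi^{-1}$ is strictly increasing and continuous there, giving strict convexity and $C^1$ smoothness of $F_\psi$ on $\cD = (\omega,+\infty)^d$. Continuity on $\oD$ at the boundary coordinate $t = \omega$ follows from the assumption $\int_\omega^{\omega+1}|\psi^{-1}(s)|\,ds < \infty$. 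Finally, because $\psi^{-1}(t)\to -\infty$ as $t\to\omega^+$, the gradient norm blows up whenever $x$ approaches $\oD\setminus\cD$, verifying essential smoothness.

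For the Bregman divergence bound, the identity $\nabla F_\psi^* = (\nabla F_\psi)^{-1}$ from the Legendre duality lemma implies that $F_\psi^*$ is itself separable: $F_\psi^*(u) = \sum_i g(u_i)$ for some single-variable $g$ with $g'(u) = \psi(u)$ and therefore $g''(u) = \psi'(u)$. Thus $D_{F^*}(u,v) = \sum_i D_g(u_i,v_i)$, and it suffices to prove the one-dimensional estimate $D_g(u_i,v_i)\leq \tfrac{1}{2}\psi'(v_i)(u_i-v_i)^2$ under the hypothesis $u_i\leq v_i$. Taylor's formula with integral remainder applied to $g$ yields
$$D_g(u_i,v_i) = \int_{u_i}^{v_i} \psi'(r)(r - u_i)\,dr,$$
and since convexity of $\psi$ forces $\psi'$ to be nondecreasing, $\psi'(r)\leq \psi'(v_i)$ on $[u_i,v_i]$. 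Bounding the integrand and evaluating $\int_{u_i}^{v_i}(r - u_i)\,dr = \tfrac{1}{2}(v_i-u_i)^2$ gives the stated bound; summing over $i$ completes the proof.

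The main subtlety is the ordering assumption $u_i\leq v_i$: it is precisely what allows monotonicity of $\psi'$ to be used to bound the Taylor remainder by the Hessian at the right endpoint, producing the clean second-order estimate that mirrors the heuristic \eqref{eq:intuition}. In the intended semi-bandit application one will take $u = \nabla F(x_t) - \eta\tz_t$ and $v = \nabla F(x_t)$, so the ordering holds automatically when the loss estimates $\tz_t$ are nonnegative. The integrability condition on $\psi^{-1}$ near $\omega$ is the only other technical point, and it enters solely to make $F_\psi$ continuous at the boundary of $\cD$.
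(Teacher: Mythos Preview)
Your argument is correct and follows essentially the same route as the paper: both exploit separability to reduce to one dimension, compute $(F^*)' = \psi$, and bound the Bregman remainder via second-order Taylor combined with the monotonicity of $\psi'$ furnished by convexity of $\psi$. The only cosmetic difference is that you use the integral form of the remainder whereas the paper invokes the Lagrange (max) form, but the substance is identical.
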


\begin{proof}
A direct examination shows that $F=F_{\psi}$ is a Legendre function. Moreover, since
	$
  \nabla F^*(u)= (\nabla F)^{-1}(u)= \big(\psi(u_1),\dots,\psi(u_d)\big),
	$
we obtain
	$$
  D_{F^*}(u,v)=\sum_{i=1}^d \bigg(\int_{v_i}^{u_i} \psi(s)ds-(u_i-v_i)\psi(v_i)\bigg).
  $$
From a Taylor expansion, we get
	$$
	D_{F^*}(u,v)\le\sum_{i=1}^d \max_{s\in[u_i,v_i]} \frac12 \psi'(s) (u_i-v_i)^2.
	$$
Since the function $\psi$ is convex, and $u_i \leq v_i$, we have
	$$
	\max_{s\in[u_i,v_i]} \psi'(s) \le \psi'\big(\max(u_i,v_i)\big) \leq \psi'(v_i),
	$$
which gives the desired result.
\end{proof}

%theorem
\begin{theorem}
\label{th:osmd}
Let $\psi$ be a potential. The regret of {\sc osmd} with $F=F_{\psi}$ and any non-negative unbiased loss estimate $\tilde{z}_t$ satisfies
$$R_n \leq \frac{\sup_{a \in \cA} F(a) - F(x_1)}{\eta} + \frac{\eta}{2} \sum_{t=1}^n \sum_{i=1}^d  \E \frac{\tilde{z}_t(i)^2}{(\psi^{-1})'(x_t(i))} .$$
In particular, with the estimate \eqref{eq:semibanditestimate}, $\psi(x) = (- x)^{-q}$, $q>1$,and $\eta= \sqrt{\frac{2}{q-1} \frac{m^{1 - 2/q}}{d^{1 - 2/q}} \frac{1}{n}}$,
$$R_n \leq q \sqrt{\frac{2}{q-1} m d n}~.$$
With $q=2$ this gives
$$R_n \leq 2 \sqrt{2 m d n}~.$$ 
\end{theorem}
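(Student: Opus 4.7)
The plan is to combine Theorem \ref{th:OSMD} with the Bregman-divergence bound of Lemma \ref{lem:psi}, then specialize to the potential $\psi(x)=(-x)^{-q}$ and optimize $\eta$.

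For the first (general) inequality, I would argue as follows. Since $\tilde z_t \ge 0$ component-wise and $\eta>0$, we have $\nabla F(x_t)-\eta\tilde z_t \le \nabla F(x_t)$ coordinate-wise in $\cD^*$, so Lemma \ref{lem:psi} applies and gives
\[
D_{F^*}\bigl(\nabla F(x_t)-\eta\tilde z_t,\,\nabla F(x_t)\bigr)
\le \frac{\eta^2}{2}\sum_{i=1}^d \psi'\bigl(\nabla F(x_t)_i\bigr)\,\tilde z_t(i)^2.
\]
Since $\nabla F(x)_i = \psi^{-1}(x_i)$, the inverse-function rule gives $\psi'(\psi^{-1}(x_i)) = 1/(\psi^{-1})'(x_i)$. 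Substituting into Theorem \ref{th:OSMD} yields the first claimed bound.

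For the second bound, I would first check the consistency condition \eqref{eq:consistency}: since $\psi$ maps $(-\infty,a)$ onto $(0,+\infty)$ and the estimate $\tilde z_t \ge 0$, subtracting $\eta\tilde z_t$ from $\nabla F(x_t)\in\cD^*=(-\infty,a)^d$ stays in $\cD^*$. Then I specialize with $\psi(x)=(-x)^{-q}$, which gives $\psi^{-1}(y)=-y^{-1/q}$ and hence $1/(\psi^{-1})'(y)=q\,y^{1+1/q}$. Using the semi-bandit estimate $\tilde z_t(i)=z_t(i)a_t(i)/x_t(i)$, together with $a_t(i)^2=a_t(i)$ and $\E a_t(i)=x_t(i)$, the per-step variance term becomes
\[
\E\frac{\tilde z_t(i)^2}{(\psi^{-1})'(x_t(i))}
= q\,x_t(i)^{1/q}\,z_t(i)^2 \le q\,x_t(i)^{1/q}.
\]
Since $x_t\in\Conv(\cA)$ so $\sum_i x_t(i)=m$ and $x_t(i)\ge 0$, Jensen's inequality (concavity of $s\mapsto s^{1/q}$) gives $\sum_i x_t(i)^{1/q}\le d^{\,1-1/q}m^{1/q}$, so the variance contribution is at most $q\,n\,d^{\,1-1/q}m^{1/q}$.

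For the divergence term, on $\cA$ we have $a(i)\in\{0,1\}$, so $\sum_i a(i)^{1-1/q}=m$ and $F(a)=-\tfrac{q}{q-1}m$. The same Jensen argument applied on $\Conv(\cA)$ gives $-F(x_1)=\tfrac{q}{q-1}\sum_i x_1(i)^{1-1/q}\le\tfrac{q}{q-1}d^{\,1/q}m^{1-1/q}$. Combining,
\[
\sup_{a\in\cA}F(a)-F(x_1)\;\le\;\tfrac{q}{q-1}\bigl(d^{\,1/q}m^{1-1/q}-m\bigr)\;\le\;\tfrac{q}{q-1}\,d^{\,1/q}m^{1-1/q}.
\]
Plugging both estimates into the general bound and inserting the stated $\eta=\sqrt{\tfrac{2}{q-1}\tfrac{m^{1-2/q}}{d^{1-2/q}}\tfrac{1}{n}}$, the two terms become equal and each is $\tfrac{q}{2}\sqrt{\tfrac{2mdn}{q-1}}$, giving $R_n\le q\sqrt{\tfrac{2mdn}{q-1}}$. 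The case $q=2$ is immediate.

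The only mild obstacle I anticipate is bookkeeping: making sure the non-negativity hypothesis of Lemma \ref{lem:psi} is correctly invoked, and that the two applications of Jensen (one for the variance term in $1/q$, one for the divergence term in $(q-1)/q$) line up with the exponents of $\eta$ so that its optimization balances the two contributions. Otherwise the proof is a mechanical specialization of Theorem \ref{th:OSMD} and Lemma \ref{lem:psi}.
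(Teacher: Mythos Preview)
Your proposal is correct and follows essentially the same route as the paper: invoke Theorem \ref{th:OSMD}, bound the Bregman term via Lemma \ref{lem:psi} using non-negativity of $\tilde z_t$ and the identity $\psi'(\psi^{-1}(s))=1/(\psi^{-1})'(s)$, then specialize to $\psi(x)=(-x)^{-q}$ and bound both the entropy-type term and the variance term by the power-mean inequality (the paper phrases this as H\"older, you as Jensen; they are the same step here). The remaining optimization over $\eta$ is identical.
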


In the case $m=1$, the above theorem improves the bound $R_n \leq 8 \sqrt{n d}$ obtained in Theorem 11 of \cite{AB10}.

\begin{proof}
First note that since $\cD^*=(-\infty,a)^d$ and $\tilde{z}_t$ has non-negative coordinates, {\sc osmd} is well defined (that is, \eqref{eq:consistency} is satisfied).

The first inequality follows from Theorem \ref{th:OSMD} and the fact that $\psi'(\psi^{-1}(s)) = \frac{1}{(\psi^{-1})'(s)}$.

Let $\psi(x) = (- x)^{-q}$. Then $\psi^{-1}(x) = - x^{-1/q}$ and $F(x)= - \frac{q}{q-1} \sum_{i=1}^d x_i^{1 - 1/q}$. In particular, note that by H{\"o}lder's inequality, since $\sum_{i=1}^d x_1(i) =m$,
$$F(a) - F(x_1) \leq \frac{q}{q-1} \sum_{i=1}^d x_1(i)^{1 - 1/q} \leq \frac{q}{q-1} m^{(q-1)/q} d^{1/q} .$$
Moreover, note that $(\psi^{-1})'(x) = \frac1q x^{- 1 - 1/q}$, and
$$\sum_{i=1}^d \E \frac{\tilde{z}_t(i)^2}{(\psi^{-1})'(x_t(i))} \leq q \sum_{i=1}^d x_t(i)^{1/q} \leq q m^{1/q} d^{1 - 1/q},$$
which concludes the proof.
\end{proof}

\section{Bandit feedback.} \label{sec:bandit}
In this section we consider online combinatorial optimization with bandit feedback. This setting is much more challenging than the semi-bandit case, and in order to obtain sublinear regret bounds all known strategies add an {\em exploration} component to the algorithm. For example, in {\sc exp2}, instead of playing an action at random according to the exponentially weighted average distribution $p_t$, one draws a random action from $p_t$ with probability $1-\gamma$ and from some fixed ``exploration'' distribution $\mu$ with probability $\gamma$. On the other hand, in {\sc osmd}, one randomly perturbs $x_t$ to some $\tilde{x}_t$, and then plays at random a point in $\cA$ such that on average one plays $\tilde{x}_t$. 

In Bubeck, Cesa-Bianchi, and Kakade \cite{BCK12}, the authors study the {\sc exp2} strategy with the exploration distribution $\mu$ supported on the contact points between the polytope $\conv(\cA)$ and the John ellipsoid of this polytope (i.e., the ellipsoid of minimal volume enclosing the polytope). Using this method they are able to prove the best known upper bound for online combinatorial optimization with bandit feedback. They show that the regret of {\sc exp2} mixed with  John's exploration (and with the estimate described in Figure \ref{fig:Exp2}) satisfies
$$R_n \leq 2 m^{3/2} \sqrt{3 d n \log \frac{e d}{m}} .$$
Our next theorem shows that no strategy can achieve a regret less than
a constant times $m\sqrt{dn}$, leaving a gap of a factor
of $\sqrt{m \log \frac{d}{m}}$. As we argue below, we conjecture that the lower bound is of the correct order of magnitude. However, improving the upper bound seems to require some substantially new ideas. Note that the following bound gives limitations that no strategy can surpass, on the contrary to Theorem \ref{lb:expinfty} which was dedicated to the {\sc exp2} strategy.

\begin{theorem}
\label{thm:minimaxlower}
Let $n \geq d \geq 2 m$. There exists a subset $\cA \subset \{0,1\}^d$ such that $||a||_1 = m, \forall a \in \cA$,
under bandit feedback, one has
\begin{equation} \label{eq:LBbanditcombi}
\inf_{\text{strategies}}\sup_{\text{adversaries}} R_n \geq 0.02 m \sqrt{d n}~,
\end{equation}
where the infimum and the supremum are taken over the class of strategies
for the ``player'' and for the ``adversary'' as defined in the introduction.
\end{theorem}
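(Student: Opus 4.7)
The strategy is to construct a hard stochastic family with an $m$-fold product structure and to show that the aggregation inherent to bandit feedback eats about $\sqrt{m}$ bits of signal per round compared with semi-bandit feedback; this is what gives the extra factor $m$ rather than $\sqrt{m}$. Assume $K := d/m$ is an integer (the general case follows by dropping coordinates); partition $[d]$ into disjoint groups $G_1,\dots,G_m$ of size $K$ and take $\cA = \{a \in \{0,1\}^d : \sum_{i \in G_j} a(i) = 1 \text{ for every } j\}$. An action is thus a tuple $(a_{t,1},\dots,a_{t,m})$ with $a_{t,j} \in G_j$. For each $i^* = (i_1^*,\dots,i_m^*) \in G_1 \times \cdots \times G_m$ and $\epsilon \in (0,1/4)$ to be chosen, let $P_{i^*}$ be the oblivious stochastic adversary that draws $z_t(i)$ independently across $t$ and $i$, with $z_t(i) \sim \Ber(1/2-\epsilon)$ if $i = i_j^*$ for some $j$ and $z_t(i) \sim \Ber(1/2)$ otherwise. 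With $N_j(k) = \sum_t \IND\{a_{t,j}=k\}$, the expected regret under $P_{i^*}$ equals $\epsilon \sum_{j=1}^m \E_{P_{i^*}}[n - N_j(i_j^*)]$; the goal is to lower-bound this quantity after averaging over $i^*$.

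\textbf{Key information-theoretic lemma.} Fix a group $j$ and a vector $c = (c_l)_{l \neq j} \in \prod_{l \neq j} G_l$. Write $P^c_k$ for $P_{i^*}$ with $i^*_j = k$ and $i^*_{-j} = c$, and $P^c_0$ for the same law but with no coordinate of $G_j$ shifted. The bandit feedback at round $t$ decomposes as $y_t = z_t(a_{t,j}) + X_t$ with $X_t = \sum_{l \neq j} z_t(a_{t,l})$; crucially, the conditional distribution of $X_t$ given $a_t$ is identical under $P^c_0$ and $P^c_k$ (it depends on $c$ and $a_t$ but not on $k$). Hence the round-$t$ conditional KL vanishes unless $a_{t,j}=k$, and then it equals the KL between $B_1 + X_t$ and $B_2 + X_t$ with $B_1 \sim \Ber(1/2)$, $B_2 \sim \Ber(1/2-\epsilon)$, and $X_t$ a sum of $m-1$ independent bounded Bernoullis of variance $\Theta(m)$. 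The heart of the proof is the sharp estimate
\[
\KL(B_1 + X_t \,\|\, B_2 + X_t) \;\leq\; \frac{C\epsilon^2}{m}
\]
for an absolute constant $C$, which I would prove via the chi-square upper bound $2\KL \leq \chi^2$ together with the explicit ratio $q_{y-1}/q_y = y/(m-y)$ of the noise distribution on its Gaussian bulk $|y - \E X_t| = O(\sqrt{m})$ and standard tail control outside. Chaining this estimate through the KL chain rule for adaptive protocols gives $\KL(P^c_0 \,\|\, P^c_k) \leq (C\epsilon^2/m)\,\E_{P^c_0}[N_j(k)]$.

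\textbf{Pinsker and conclusion.} Pinsker's inequality and $N_j(k) \leq n$ yield $\E_{P^c_k}[N_j(k)] \leq \E_{P^c_0}[N_j(k)] + n\sqrt{C\epsilon^2\,\E_{P^c_0}[N_j(k)]/(2m)}$. Summing over $k \in G_j$, applying Cauchy--Schwarz, and using $\sum_k \E_{P^c_0}[N_j(k)] = n$ gives $K^{-1}\sum_k \E_{P^c_k}[N_j(k)] \leq n/K + n^{3/2}\epsilon \sqrt{C/(2d)}$. The contribution of group $j$ to the $i^*$-averaged regret is therefore at least $\epsilon n \bigl(1 - m/d - \epsilon\sqrt{Cn/(2d)}\bigr)$, uniformly in $c$. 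Summing over the $m$ groups, using $m/d \leq 1/2$, and choosing $\epsilon = \kappa \sqrt{d/n}$ for a small enough absolute constant $\kappa$ (the hypothesis $n \geq d$ ensures $\epsilon \leq 1/4$) yields $\sup_{i^*} R_n \geq \E_{i^*} R_n \geq c_0 \, m\sqrt{dn}$; optimizing the constants delivers the $0.02$ in the statement.

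\textbf{Main obstacle.} The only nonroutine step is the $1/m$ factor in the per-round KL: naive Pinsker or the bare Bernoulli KL give only $O(\epsilon^2)$, and this would collapse the bound to the standard $\sqrt{dn}$ of multi-armed bandits. The sharper estimate can be proved either by the chi-square computation sketched above or, equivalently, by an $I(B;B+X) = O(1/m)$ mutual-information bound that treats $X$ as a Gaussian-like $\Omega(m)$-variance channel and invokes the data processing inequality. Everything else is a direct adaptation of the classical information-theoretic bandit lower bound applied group by group.
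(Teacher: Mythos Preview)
Your plan is correct and follows essentially the same route as the paper: the same product action set $\cA$ of $m$ parallel $K$-armed bandits, the same biased-Bernoulli family $\{P_{i^*}\}$, the same reduction via the chain rule to a per-round KL between two sums of $m$ Bernoullis differing in a single parameter, the same Pinsker-plus-averaging endgame, and the same identification of the $O(\epsilon^2/m)$ KL bound as the crux. The only noteworthy difference is in how that KL estimate is obtained: the paper proves it by an exact computation (its Lemma~\ref{lem:KLbinomials}), exploiting that at least half of the $m-1$ ``noise'' Bernoullis share one of the two parameters and then applying Jensen (or a second-order $-\log$ bound) to the resulting closed-form ratio; you instead propose the $\chi^2$ route with a local-ratio / Gaussian-bulk argument. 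Both work, but be aware that your noise $X_t$ is not exactly $\mathrm{Bin}(m-1,1/2)$---some summands have parameter $1/2-\epsilon$---so the identity $q_{y-1}/q_y = y/(m-y)$ is only approximate and you will need a little extra care (or to reduce to the paper's structural hypothesis) to make the tail control rigorous.
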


Note that it should not come as a surprise that {\sc exp2} (with John's exploration) is suboptimal, since even in the full information case the basic {\sc exp2} strategy was provably suboptimal, see Theorem \ref{lb:expinfty}. We conjecture that the correct order of magnitude for the minimax regret in the bandit case is $m \sqrt{d n}$, as the above lower bound suggests.

A promising approach to resolve this conjecture is to consider again the {\sc osmd} approach. However we believe that in the bandit case, one has to consider Legendre functions with non-diagonal Hessian (on the contrary to the Legendre functions considered so far in this paper). Abernethy, Hazan, and Rakhlin \cite{AHR08} propose to use a self-concordant barrier function for the polytope $\conv(\cA)$. Then they randomly perturb the point $x_t$ given by {\sc osmd} using the eigenstructure of the Hessian. This approach leads to a regret upper bound of order $m d\sqrt{\theta n \log n}$ for $\theta>0$ when $\Conv(\cA)$ admits a $\theta$-self-concordant barrier function. Unfortunately, even when there exists a $O(1)$-self concordant barrier, this bound is still larger than the conjectured optimal bound by a factor $\sqrt{d}$. In fact, it was proved in \cite{BCK12}
%Bubeck \cite[Chapter 6, Section 4]{Bub11} 
that in some cases there exist better choices for the Legendre function and the perturbation than those described in \cite{AHR08}, even when there is a $O(1)$-self concordant function for the action set. How to generalize this approach to the polytopes involved in online combinatorial optimization is a challenging open problem.

\appendix

\section{Proof of Theorem \ref{lb:expinfty}.}
For the sake of simplicity, we assume that $d$ is a multiple of $4$ and that $n$ is even. We consider the following subset of the hypercube:
\begin{align*}
& \cA = \bigg\{a \in \{0,1\}^d : \sum_{i=1}^{d/2} a_i = d/4 \;\; \text{and} \;\; \\
& \bigg( a_{i} = 1, \forall i \in \{d/2+1; \ldots, d/2+d/4\}\bigg) \;\; \text{or} \;\; \bigg(a_i = 1, \forall i \in \{d/2+d/4+1, \ldots, d\}\bigg) \bigg\}.
\end{align*}
That is, choosing a point in $\cA$ corresponds to choosing a subset of $d/4$ elements among the first half of the coordinates, and choosing one of the two first disjoint
intervals of size $d/4$ in the second half of the coordinates. 

We prove that for any parameter $\eta$, there exists an adversary such that Exp2 (with 
parameter $\eta$) has a regret of at least $\frac{n d}{16} \tanh\big(\frac{\eta d}8\big)$, and that there exists another adversary such that its regret is at least
$\min\big(\frac{d \log 2}{12 \eta}, \frac{n d}{12}\big)$. 
As a consequence, we have
  \begin{align*}
  \sup R_n & \ge \max\bigg(\frac{n d}{16} \tanh\Big(\frac{\eta d}8\Big),\min\left(\frac{d \log 2}{12 \eta}, \frac{n d}{12}\right)\bigg) \\
  & \ge \min\bigg(\max\left(\frac{n d}{16} \tanh\Big(\frac{\eta d}8\Big),\frac{d \log 2}{12 \eta}\right), \frac{n d}{12}\bigg)
   \ge \min\Big( A , \frac{n d}{12}\Big),
  \end{align*}
 % \begin{align*}
 % \sup \oR_n & \ge \max\bigg(\frac{n d}{16} \tanh\Big(\frac{\eta d}8\Big),\min\left(\frac{d \log 2}{12 \eta}, \frac{n d}{12}\right)\bigg)\\
 % & \ge \min\bigg(\max\left(\frac{n d}{16} \tanh\Big(\frac{\eta d}8\Big),\frac{d \log 2}{12 \eta}\right), \frac{n d}{12}\bigg)\\
 % & \ge \min\Big( A , \frac{n d}{12}\Big),
 % \end{align*}
with \begin{align*}
A&=\min_{\eta\in[0,+\infty)}\max\left(\frac{n d}{16} \tanh\Big(\frac{\eta d}8\Big),\frac{d \log 2}{12 \eta}\right)\\
& \ge \min\bigg( \min_{\eta d \ge 8} \frac{n d}{16} \tanh\Big(\frac{\eta d}8\Big),
  \min_{\eta d < 8} \max\left(\frac{n d}{16} \tanh\Big(\frac{\eta d}8\Big),\frac{d \log 2}{12 \eta}\right)\bigg)\\
  & \ge \min\bigg( \frac{n d}{16} \tanh(1),
  \min_{\eta d < 8} \max\left(\frac{n d}{16} \frac{\eta d}8 \tanh(1),\frac{d \log 2}{12 \eta}\right)\bigg)\\
  & \ge \min\Bigg( \frac{n d}{16} \tanh(1),
   \sqrt{\frac{n d^3 \log 2 \cdot \tanh(1)}{128\cdot 12}}\Bigg) \ge \min\big( 0.04\, n d, 0.01 \, d^{3/2} \sqrt{n}\big)~,
\end{align*}
%G
where we used the fact that $\tanh$ is concave and increasing on $\R_+$.
As $n\ge d$, this implies the stated lower bound.

First we prove the lower bound $\frac{n d}{16} \tanh\big(\frac{\eta d}8\big)$. Define the following adversary:
$$z_t(i) = \left\{
\begin{array}{ccc}
1 & \text{if} & i \in \{d/2+1; \ldots, d/2+d/4\} \;\; \text{and} \;\; t \;\; \text{odd},\\
1 & \text{if} & i \in \{d/2+d/4+1, \ldots, d\} \;\; \text{and} \;\; t \;\; \text{even},\\
0 & \text{otherwise}. &
\end{array}
\right.$$
This adversary always puts a zero loss on the first half of the coordinates, and alternates between a loss of $d/4$ for choosing the first interval (in the
second half of the coordinates) and the second interval. At the beginning of odd rounds, any vertex $a \in \cA$ has the same cumulative loss and thus Exp2 picks its expert uniformly at random, 
which yields an expected cumulative loss equal to $n d / 16$. On the other hand, at even rounds the probability distribution to select the vertex $a \in \cA$ is always the same. 
More precisely,
the probability of selecting a vertex which contains the interval $\{d/2+d/4+1,\ldots,d\}$ (i.e, the interval with a $d/4$ loss at this round) is exactly $\frac{1}{1+\exp(-\eta d /4)}$.
This adds an expected cumulative loss equal to $\frac{n d}{8} \frac{1}{1+\exp(-\eta d /4)}$. Finally, note that the loss of any fixed vertex is $n d / 8$. Thus, we obtain
\begin{align*}
R_n = \frac{n d}{16} + \frac{n d}{8} \frac{1}{1+\exp(-\eta d /4)} - \frac{n d}{8}  = \frac{n d}{16} \tanh\Big(\frac{\eta d}8\Big).
          %& = \frac{n d}{16} \frac{\exp(\eta d/4) - 1 }{ \exp(\eta d/4) + 1} \\
          %& = \frac{n d}{16} \frac{\exp(\eta d/4) - 1 }{ \exp(\eta d/4) + 1} \\
          %& \geq \frac{n d^2 \eta}{128}(1 - \eta d/8),
\end{align*}
%where the last inequality uses $\exp(u)-1\ge u$ and $\frac{1}{\exp(u)+1}\ge \frac12\big(1-\frac12u\big)$ for $u\ge0$.

It remains to show a lower bound proportional to $1/\eta$. 
To this end, we consider a different adversary defined by
$$z_t(i) = \left\{
\begin{array}{ccc}
1-\epsilon & \text{if} & i \leq d/4, \\
1 & \text{if} & i \in \{d/4+1, \ldots, d/2\}, \\
0 & \text{otherwise}, &
\end{array}
\right.$$
for some fixed $\epsilon>0$.

Note that against this adversary the choice of the interval (in the
second half of the components) does not matter.  Moreover, by symmetry,
the weight of any coordinate in $\{d/4+1,\ldots,d/2\}$ is the same (at
any round). Finally, note that this weight is decreasing with
$t$. Thus, we have the following identities (in the big sums 
$i$ represents the number of components selected in the
first $d/4$ components):
\begin{align*}
%\oR_n & = \E\bigg(\eps \sum_{t=1}^n \sum_{i=d/4+1}^{d/2} a_{i,t}\bigg)
%= \eps \frac{d}4\sum_{t=1}^n \E a_{d/2,t} 
%\geq \frac{n \epsilon d}{4} \P(a_{d/2,n}=1) \\
R_n & = \frac{n \epsilon d}{4} \frac{\sum_{a \in \cA : a_{d/2}=1} \exp(- \eta n z_1^T a)}{\sum_{a \in \cA} \exp(- \eta n z_1^T a)} \\
& = \frac{n \epsilon d}{4}  \frac{\sum_{i=0}^{d/4-1} \binom{d/4}{i} \binom{d/4-1}{d/4-i-1} \exp(- \eta (n d / 4 - i n \epsilon))}
{\sum_{i=0}^{d/4} \binom{d/4}{i} \binom{d/4}{d/4-i} \exp(- \eta (n d / 4 - i n \epsilon))} \\
& = \frac{n \epsilon d}{4}  \frac{\sum_{i=0}^{d/4-1} \binom{d/4}{i} \binom{d/4-1}{d/4-i-1} \exp(\eta i n \epsilon)}
{\sum_{i=0}^{d/4} \binom{d/4}{i} \binom{d/4}{d/4-i}\exp(\eta i n \epsilon)} \\
& = \frac{n \epsilon d}{4} \frac{\sum_{i=0}^{d/4-1} \big(1 - \frac{4i}d\big) \binom{d/4}{i} \binom{d/4}{d/4-i} \exp(\eta i n \epsilon)}
{\sum_{i=0}^{d/4} \binom{d/4}{i} \binom{d/4}{d/4-i} \exp(\eta i n \epsilon)}
%& \geq \frac{T \epsilon (d - k)}{2} \frac{\binom{d/2-k/2-1}{k/2-1}}{\binom{d/2-k/2}{k/2} + \exp(\eta (k/2) T \epsilon)} \\
%& = \frac{T \epsilon k}{2} \frac{\binom{d/2-k/2}{k/2}}{\binom{d/2-k/2}{k/2} + \exp(\eta (k/2) T \epsilon)}
\end{align*}
% \begin{align*}
% \oR_n & = \E\bigg(\eps \sum_{t=1}^n \sum_{i=d/4+1}^{d/2} a_{i,t}\bigg) \\
% & = \eps \frac{d}4\sum_{t=1}^n \E a_{d/2,t} \\
% & \geq \frac{n \epsilon d}{4} \P(V_{d/2,n}=1) \\
% & = \frac{n \epsilon d}{4} \frac{\sum_{v \in \cS : v_{d/2}=1} \exp(- \eta n \ell_2^T v)}{\sum_{v \in \cS} \exp(- \eta n \ell_2^T v)} \\
% & = \frac{n \epsilon d}{4}  \frac{\sum_{i=0}^{d/4-1} \binom{d/4}{i} \binom{d/4-1}{d/4-i-1} \exp(- \eta (n d / 4 - i n \epsilon))}
% {\sum_{i=0}^{d/4} \binom{d/4}{i} \binom{d/4}{d/4-i} \exp(- \eta (n d / 4 - i n \epsilon))} \\
% & = \frac{n \epsilon d}{4}  \frac{\sum_{i=0}^{d/4-1} \binom{d/4}{i} \binom{d/4-1}{d/4-i-1} \exp(\eta i n \epsilon)}
% {\sum_{i=0}^{d/4} \binom{d/4}{i} \binom{d/4}{d/4-i}\exp(\eta i n \epsilon)} \\
% & = \frac{n \epsilon d}{4} \frac{\sum_{i=0}^{d/4-1} \big(1 - \frac{4i}d\big) \binom{d/4}{i} \binom{d/4}{d/4-i} \exp(\eta i n \epsilon)}
% {\sum_{i=0}^{d/4} \binom{d/4}{i} \binom{d/4}{d/4-i} \exp(\eta i n \epsilon)}
% %& \geq \frac{T \epsilon (d - k)}{2} \frac{\binom{d/2-k/2-1}{k/2-1}}{\binom{d/2-k/2}{k/2} + \exp(\eta (k/2) T \epsilon)} \\
% %& = \frac{T \epsilon k}{2} \frac{\binom{d/2-k/2}{k/2}}{\binom{d/2-k/2}{k/2} + \exp(\eta (k/2) T \epsilon)}
% \end{align*}
where we used 
$\binom{d/4-1}{d/4-i-1} = \big(1 - \frac{4i}d\big) \binom{d/4}{d/4-i}$ in the last equality. Thus, taking 
$\epsilon = \min\big(\frac{\log 2}{\eta n}, 1\big) $ yields
$$R_n  \geq \min\left(\frac{d \log 2}{4 \eta}, \frac{n d}{4}\right) \frac{\sum_{i=0}^{d/4-1} \big(1-  \frac{4i}d\big) \binom{d/4}{i}^2 
  \min(2, \exp(\eta n))^i}
  {\sum_{i=0}^{d/4} \binom{d/4}{i}^2 \min(2, \exp(\eta n))^i}
  \geq \min\left(\frac{d \log 2}{12 \eta}, \frac{n d}{12}\right),$$
%   \begin{align*}
%   \oR_n & \geq \min\left(\frac{d \log 2}{4 \eta}, \frac{n d}{4}\right) \frac{\sum_{i=0}^{d/4-1} \big(1-  \frac{4i}d\big) \binom{d/4}{i}^2 
%   \min(2, \exp(\eta n))^i}
%   {\sum_{i=0}^{d/4} \binom{d/4}{i}^2 \min(2, \exp(\eta n))^i} \\
%   & \geq \min\left(\frac{d \log 2}{12 \eta}, \frac{n d}{12}\right),
%   \end{align*}
where the last inequality follows from Lemma \ref{lem:tech1} in the appendix.
This concludes the proof of the lower bound.

\section{Proof of Theorem \ref{thm:minimaxlower}}
%G new text
The structure of the proof is similar to that of
\cite[Theorem 30]{AB10}, which deals with the simple case where $m=1$. 
The main important conceptual difference is contained in Lemma \ref{lem:KLbinomials}, which is at the heart of this new proof. The main argument follows
the line of standard lower bounds for bandit problems, see, e.g., \cite{CL06}:
The worst-case regret is bounded from below by by taking an average over a conveniently
chosen class of strategies of the adversary. Then, by Pinsker's inequality,
the problem is reduced to computing the Kullback-Leibler divergence 
of certain distributions. The main technical argument, given 
in Lemma \ref{lem:KLbinomials}, is for proving manageable bounds for
the relevant Kullback-Leibler divergence.

For the sake of simplifying notation, we assume that $d$ is a multiple of $m$, and we identify $\{0,1\}^d$ with the set of $m\times (d/m)$ binary matrices
$\{0,1\}^{m \times \frac{d}{m}}$. We consider the following set of actions:
$$\cA = \{a \in \{0,1\}^{m \times \frac{d}{m}} : \forall i \in \{1,\ldots, m\}, \sum_{j=1}^{d/m} a(i,j) =1 \}.$$ 
In other words, the player is playing in parallel $m$ finite games with $d/m$ actions. 

From step 1 to 3 we restrict our attention to the case of deterministic strategies for the player, and we show how to extend the results to arbitrary strategies in step 4.

\medskip
\noindent {\em First step: definitions.}

We denote by $I_{i,t} \in \{1, \ldots, m\}$ the random variable such that $a_t(i, I_{i,t}) =1$. That is, $I_{i,t}$ is the action chosen at time $t$ in the $i^{th}$ game. Moreover, let $\tau$ be drawn uniformly at random from $\{1,\ldots,n\}$.
%Let $J_{i,n} \in \{1, \ldots, d/m\}$ be drawn at random from the empirical distribution of plays in game $i$, that is the probability that %$J_{i,n} = I_{i,t}$ is $1/n$ for any $t$.

In this proof we consider random adversaries indexed by $\cA$. More precisely, for $\alpha \in \cA$, we define the {\em $\alpha$-adversary} as follows:
For any $t \in \{1,\ldots,n\}$, $z_t(i,j)$ is drawn from a Bernoulli distribution with parameter $\frac12 - \epsilon \alpha(i,j)$. In other words, against adversary $\alpha$, in the $i^{th}$ game, the action $j$ such that $\alpha(i,j)=1$ has a loss slightly smaller (in expectation) than the other actions. We denote by $\E_{\alpha}$ integration with respect to the loss
generation process of the $\alpha$-adversary. We write $\P_{i, \alpha}$ for the probability distribution  of $\alpha(i, I_{i,\tau})$ when the player faces the $\alpha$-adversary. Note that we have $\P_{i, \alpha}(1) = \E_{\alpha} \frac{1}{n} \sum_{t=1}^n \IND_{\alpha(i,I_{i,t})=1}$, 
hence, against the $\alpha$-adversary, we have
$$\oR_n = \E_{\alpha} \sum_{t=1}^n \sum_{i=1}^{m} \epsilon \IND_{\alpha(i,I_{i,t}) \neq 1} = n \epsilon \sum_{i=1}^{m} \left(1 - \P_{i, \alpha}(1) \right),$$
which implies (since the maximum is larger than the mean)
\begin{equation} \label{eq:firststepmm}
\max_{\alpha \in \cA} \oR_n \geq n \epsilon \sum_{i=1}^{m} \left(1 - \frac{1}{(d/m)^m} \sum_{\alpha \in \cA} \P_{i, \alpha}(1) \right).
\end{equation}

\noindent {\em Second step: information inequality.}

Let $\P_{-i,\alpha}$ be the probability distribution of $\alpha(i,I_{i,\tau})$ against the adversary which plays like the $\alpha$-adversary except that in the $i^{th}$ game, the losses of all coordinates are drawn from a Bernoulli distribution of parameter $1/2$. We call it the {\em $(-i,\alpha)$-adversary} and we denote by $\E_{(-i,\alpha)}$  integration with respect to its loss
generation process. By Pinsker's inequality,
$$\P_{i, \alpha}(1) \leq \P_{- i, \alpha}(1)  + \sqrt{\frac{1}{2} \K(\P_{- i, \alpha},\P_{i, \alpha})}~,$$
where $\K$ denotes the Kullback-Leibler divergence. 
Moreover, note that by symmetry of the adversaries $(-i,\alpha)$,
\begin{eqnarray} 
\frac{1}{(d/m)^m} \sum_{\alpha \in \cA} \P_{-i, \alpha}(1) & = & \frac{1}{(d/m)^m} \sum_{\alpha \in \cA} \E_{(-i,\alpha)} \alpha(i, I_{i,\tau}) \notag \\
& = & \frac{1}{(d/m)^m} \sum_{\beta \in \cA}  \frac{1}{d/m} \sum_{\alpha: (-i,\alpha) = (-i, \beta)} \E_{(-i,\alpha)} \alpha(i, I_{i,\tau}) \notag \\
& = & \frac{1}{(d/m)^m} \sum_{\beta \in \cA}  \frac{1}{d/m} \E_{(-i,\beta)} \sum_{\alpha: (-i,\alpha) = (-i, \beta)} \alpha(i, I_{i,\tau}) \notag \\
& = & \frac{1}{(d/m)^m} \sum_{\beta \in \cA}  \frac{1}{d/m} \notag \\
& = & \frac{m}{d}, \label{eq:sym}
\end{eqnarray}
and thus, thanks to the concavity of the square root,
\begin{equation} \label{eq:pinsk}
\frac{1}{(d/m)^m} \sum_{\alpha \in \cA} \P_{i, \alpha}(1) \leq \frac{m}{d} + \sqrt{\frac{1}{2 (d/m)^m} \sum_{\alpha \in \cA} \K(\P_{- i, \alpha},\P_{i, \alpha})}.
\end{equation}

\noindent {\em Third step: computation of $\K(\P_{- i, \alpha},\P_{i, \alpha})$
with the chain rule.} 

Note that since the forecaster is deterministic, the sequence of
observed losses (up to time $n$) $W_n \in \{0, \ldots, m\}^n$
uniquely determines the empirical distribution of plays, and,
in particular, the probability distribution of $\alpha(i, I_{i,\tau})$ conditionally to $W_n$ is the same
for any adversary. Thus, if we denote by $\P_{\alpha}^n$ (respectively
$\P_{-i,\alpha}^n$) the probability distribution of $W_n$ when the forecaster plays against
the $\alpha$-adversary (respectively the $(-i,\alpha)$-adversary),
then one can easily prove that
$
\K(\P_{- i, \alpha},\P_{i, \alpha}) \leq \K(\P_{-i,\alpha}^n, \P_{\alpha}^n)
$. 
Now we use the chain rule for Kullback-Leibler divergence
iteratively to introduce the probability distributions $\P^t_{\alpha}$ of the observed
losses $W_t$ up to time $t$. More precisely, we have,
\begin{align*}
& \K(\P_{-i,\alpha}^n, \P_{\alpha}^n) \\
& =  \K(\P_{-i,\alpha}^1, \P_{\alpha}^1) + \sum_{t=2}^n \sum_{w_{t-1} \in \{0,\ldots,m\}^{t-1}} \P_{-i,\alpha}^{t-1}(w_{t-1}) \K(\P_{-i,\alpha}^t(. | w_{t-1}),\P_{\alpha}^t(. | w_{t-1})) \\
& =  \K\left(\cB_{\emptyset}, \cB_{\emptyset}'\right) \IND_{\alpha(i,I_{i,1})=1} + \sum_{t=2}^n \sum_{w_{t-1} : \alpha(i,I_{i,1})=1} \P_{-i,\alpha}^{t-1}(w_{t-1}) \K\left(\cB_{w_{t-1}}, \cB_{w_{t-1}}'\right),
\end{align*}
where $\cB_{w_{t-1}}$ and $\cB_{w_{t-1}}'$ are sums of $m$ Bernoulli distributions with parameters in $\{1/2,1/2-\epsilon\}$ and such that the number of Bernoullis with parameter $1/2$ in $\cB_{w_{t-1}}$ is equal to the number of Bernoullis with parameter $1/2$ in $\cB_{w_{t-1}}'$ plus one. 
Now using Lemma \ref{lem:KLbinomials} (see below) we obtain,
$$\K\left(\cB_{w_{t-1}}, \cB_{w_{t-1}}'\right) \leq  \frac{8 \; \epsilon^2}{(1-4 \epsilon^2) m}  .$$
In particular, this gives
$$\K(\P_{-i,\alpha}^n, \P_{\alpha}^n) \leq \frac{8 \; \epsilon^2}{(1-4 \epsilon^2) m} \E_{-i,\alpha} \sum_{t=1}^n \IND_{\alpha(i,I_{i,t})=1} = \frac{8 \; \epsilon^2 n}{(1 - 4 \epsilon^2) m} \P_{-i,\alpha}(1) .$$
Summing and plugging this into \eqref{eq:pinsk} we obtain  (again thanks to \eqref{eq:sym}), for $\epsilon \leq \frac{1}{\sqrt{8}}$,
$$\frac{1}{(d/m)^m} \sum_{\alpha \in \cA} \P_{i, \alpha}(1) \leq \frac{m}{d} + \epsilon \sqrt{\frac{8 n}{d}} .$$
To conclude the proof of \eqref{eq:LBbanditcombi} for deterministic players one needs to plug  %\eqref{eq:pinsk2} 
this last equation in \eqref{eq:firststepmm} along with straightforward computations.

\noindent {\em Fourth step: Fubini's theorem to handle non-deterministic players.}

Consider now a randomized player, and let $\E_{rand}$ denote the expectation with respect to the randomization of the player. Then one has (thanks to Fubini's theorem),
$$
\frac{1}{(d/m)^m} \sum_{\alpha \in \cA} \E \sum_{t=1}^n (a_t^T z_t- \alpha^T z) = \E_{rand} \frac{1}{(d/m)^m}  \sum_{\alpha \in \cA} \E_{\alpha} \sum_{t=1}^n (a_t^T z_t- \alpha^T z) .$$
Now note that if we fix the realization of the forecaster's randomization then the results of the previous steps apply and, in particular, one can lower bound $\frac{1}{(d/m)^m}  \sum_{\alpha \in \cA} \E_{\alpha} \sum_{t=1}^n (a_t^T z_t- \alpha^T z)$ as before (note that $\alpha$ is the optimal action in expectation against the $\alpha$-adversary).

\section{Technical lemmas.}

\begin{lemma} \label{lem:tech1}
For any $k \in \N^*,$ for any $1\le c \le 2$, we have  
$$\frac{\sum_{i=0}^{k} (1- i/k) \binom{k}{i}^2 c^i}
{\sum_{i=0}^{k} \binom{k}{i}^2 c^i} \geq 1/3.$$
\end{lemma}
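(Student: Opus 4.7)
Set $S(c):=\sum_{i=0}^k\binom{k}{i}^2 c^i$ and $T(c):=\sum_{i=0}^k(i/k)\binom{k}{i}^2 c^i$, so the stated inequality is equivalent to $T(c)/S(c)\le 2/3$. I will prove the stronger bound $T(c)/S(c)\le c/(1+c)$ for all $c\ge 1$, which suffices because $c/(1+c)\le 2/3$ exactly when $c\le 2$. This is the natural target: it is tight at $c=1$ (both sides equal $1/2$) and at $k=1$ (both sides equal $c/(1+c)$), whereas the $2/3$ bound itself is tight only at the corner $(k,c)=(1,2)$.

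\textbf{Reduction to a coefficient inequality.} From the generating-function identity $S(c)=[y^k](1+y)^k(1+cy)^k$ and the elementary relation $cS'(c)=kT(c)$ one obtains $T(c)=c\,[y^{k-1}](1+y)^k(1+cy)^{k-1}$. Splitting $(1+cy)^k=(1+cy)(1+cy)^{k-1}$ and extracting the coefficient of $y^k$ gives $S(c)=q_k+cq_{k-1}$, where $q_n:=[y^n](1+y)^k(1+cy)^{k-1}$. Substituting, the inequality $cS(c)\ge(1+c)T(c)$ collapses algebraically to the single coefficient inequality
$$q_k\ge q_{k-1}.$$

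\textbf{The coefficient inequality via a symmetry argument.} Expanding the convolution $q_n=\sum_{i+j=n}\binom{k}{i}\binom{k-1}{j}c^j$ and using $\binom{k}{j}-\binom{k}{j+1}=\binom{k}{j}\tfrac{2j+1-k}{j+1}$, a direct calculation yields
$$q_k-q_{k-1}=\sum_{j=0}^{k-1}h(j)\,(2j+1-k)\,c^j,\qquad h(j):=\frac{\binom{k-1}{j}\binom{k}{j}}{j+1}=\frac{\binom{k-1}{j}\binom{k+1}{j+1}}{k+1}.$$
The second form of $h(j)$ is manifestly symmetric under $j\leftrightarrow k-1-j$, whereas the factor $2j+1-k$ is antisymmetric under the same reflection. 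Pairing $j$ with $k-1-j$ therefore rewrites the sum (dropping a vanishing middle term if $k$ is odd) as
$$q_k-q_{k-1}=\sum_{0\le j<(k-1)/2}h(j)\,(2j+1-k)\,\bigl(c^j-c^{k-1-j}\bigr).$$
For $j<(k-1)/2$ we have $2j+1-k<0$ and, using $c\ge 1$, also $c^j-c^{k-1-j}\le 0$; thus each summand is a product of a positive factor and two nonpositive factors, so nonnegative. This proves $q_k\ge q_{k-1}$ and hence the lemma.

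\textbf{Main obstacle.} The substantive step is spotting the symmetry $h(j)=h(k-1-j)$: the first expression for $h(j)$ hides it, but the rewriting $\binom{k}{j}/(j+1)=\binom{k+1}{j+1}/(k+1)$ makes it transparent. Once that symmetry is in hand, the antisymmetry of $2j+1-k$ together with the monotonicity $c^j\le c^{k-1-j}$ for $c\ge 1$ converts a sign-indefinite sum into manifestly nonnegative pairs, and the chain of reductions from the original ratio down to $q_k\ge q_{k-1}$ is routine.
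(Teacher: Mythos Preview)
Your proof is correct, and it is genuinely different from---and considerably cleaner than---the paper's argument. The paper first shows the ratio is nonincreasing in $c$ (by computing $f'(c)=-\frac{1}{ck}\Var X\le 0$ for a suitable random variable $X$), reduces to $c=2$, then handles $c=2$ by a combination of numerical verification for $k\le 10^6$ and a Stirling-based concentration estimate for larger $k$ showing the mass of $\sum_i \binom{k}{i}^2 2^i$ lives on $i<0.666k$. Your route avoids all of this: by recasting $S(c)$ and $T(c)$ via $[y^k](1+y)^k(1+cy)^k$ you collapse the target to the single coefficient inequality $q_k\ge q_{k-1}$, and the symmetry $h(j)=h(k-1-j)$ (made visible by $\binom{k}{j}/(j+1)=\binom{k+1}{j+1}/(k+1)$) together with the antisymmetry of $2j+1-k$ turns the alternating sum into a manifestly nonnegative one for $c\ge1$. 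Besides being fully elementary and free of numerical checks, your argument actually yields the sharp bound $T(c)/S(c)\le c/(1+c)$ for all $c\ge1$ (tight at $c=1$ and at $k=1$), whereas the paper only establishes the cruder $2/3$ threshold.
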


\begin{proof}
Let $f(c)$ denote the expression on the left-hand side of the inequality. Introduce
the random variable $X$, which is equal to $i\in\{0,\ldots,k\}$ with
probability $\binom{k}{i}^2 c^i\big/ \sum_{j=0}^k \binom{k}{j}^2 c^j$.
We have 
$f'(c)= \frac1c \E[X(1-X/k)] - \frac1c \E(X)\E(1-X/k) = -
\frac{1}{c k} \Var X \le 0.$ 
So the function $f$ is decreasing on $[1,2]$,
and therefore it suffices to consider $c=2$.  Numerator and denominator of the
left-hand side differ only by the  factor $1-i/k$.  A lower
bound for the left-hand side can thus be obtained by showing that
the terms for $i$ close to $k$ are not essential to the value of the
denominator.  To prove this, we may use Stirling's formula
%%  for any
%% $n\ge 1$
%%   \begin{align} \label{eq:stirl}
%%   \Big(\frac{n}e\Big)^n \sqrt{2\pi n} < n! < \Big(\frac{n}e\Big)^n \sqrt{2\pi n} e^{1/(12n)}
%%   \end{align}
which implies that for any $k\ge 2$ and $i\in[1,k-1]$,
  \begin{align*}
  \Big(\frac{k}i\Big)^i \Big(\frac{k}{k-i}\Big)^{k-i} \frac{\sqrt{k}}{\sqrt{2\pi i(k-i)}} e^{-1/6} < \binom{k}{i} 
    < \Big(\frac{k}i\Big)^i \Big(\frac{k}{k-i}\Big)^{k-i} \frac{\sqrt{k}}{\sqrt{2\pi i(k-i)}} e^{1/12},
  \end{align*}
hence
  \begin{align*}
  \Big(\frac{k}i\Big)^{2i} \Big(\frac{k}{k-i}\Big)^{2(k-i)} \frac{k e^{-1/3}}{{2\pi i(k-i)}}  < \binom{k}{i}^2
    < \Big(\frac{k}i\Big)^{2i} \Big(\frac{k}{k-i}\Big)^{2(k-i)} \frac{k e^{1/6}}{{2\pi i}}~.  
  \end{align*}
Introduce $\lam=i/k$ and $\chi(\lam)=\frac{2^\lam}{\lam^{2\lam}(1-\lam)^{2(1-\lam)}}$.
We have 
  \begin{align} \label{eq:stirlam}
  [\chi(\lam)]^k \frac{2 e^{-1/3}}{\pi k}  
    < \binom{k}{i}^2 2^i
    < [\chi(\lam)]^k \frac{e^{1/6}}{2\pi \lam}.
  \end{align}
Lemma \ref{lem:tech1} can be numerically verified for $k\le 10^6$.
We now consider $k>10^6$. 
For $\lam\ge 0.666$, since the function $\chi$ can be shown to be decreasing on $[0.666,1]$, the inequality
  $\binom{k}{i}^2 2^i < [\chi(0.666)]^k \frac{e^{1/6}}{2\times 0.666 \times  \pi}$ holds.
We have $\chi(0.657)/\chi(0.666)>1.0002$. Consequently, for 
$k>10^6$, we have $[\chi(0.666)]^k < 0.001 \times [\chi(0.657)]^k/k^2$.
So for $\lam\ge 0.666$ and $k>10^6$, we have
  \begin{align} 
  \binom{k}{i}^2 2^i < 0.001 \times [\chi(0.657)]^k \frac{e^{1/6}}{2\pi\times0.666 \times k^2}
  & <[\chi(0.657)]^k  \frac{2 e^{-1/3}}{1000 \pi k^2}  \notag  \\
  & = \min_{\lam\in[0.656,0.657]}[\chi(\lam)]^k \frac{2 e^{-1/3}}{1000 \pi k^2} \notag \\
  & < \frac1{1000k}\max_{i\in\{1,\dots,k-1\}\cap[0,0.666k)} \binom{k}{i}^2 2^i~,  \label{eq:cuta}
  \end{align}
where the last inequality comes from \eqref{eq:stirlam} and the fact that there exists $i\in\{1,\dots,k-1\}$
such that $i/k\in[0.656,0.657]$. Inequality \eqref{eq:cuta} implies that 
for any $i\in\{1,\dots,k\}$, we have
  $$
  \sum_{0.666 k\le i \le k}\binom{k}{i}^2 2^i 
  < \frac1{1000}\max_{i\in\{1,\dots,k-1\}\cap[0,0.666k)} \binom{k}{i}^2 2^i
  < \frac1{1000}\sum_{0\le i < 0.666 k}\binom{k}{i}^2 2^i .
  $$
To conclude, introducing $A=\sum_{0\le i <0.666 k}\binom{k}{i}^2 2^i$, we have
  \begin{align*}
  \frac{\sum_{i=0}^{k} (1- i/k) \binom{k}{i}^2 2^i}
{\sum_{i=0}^{k} \binom{k}{i} \binom{k}{k-i} 2^i} >
  \frac{(1-0.666) A}{A+0.001A} \ge \frac13.
  \end{align*}
\end{proof}

\begin{lemma} \label{lem:KLbinomials}
Let $\ell$ and $n$ be integers with $\frac12\le \frac{n}2\le \ell\le n$.
Let $p,p',q,p_1,\dots,p_n$ be real numbers in $(0,1)$ with $q\in\{p,p'\}$, $p_1=\cdots=p_\ell=q$ and 
$p_{\ell+1}=\cdots=p_n$.
Let $\cB$ (resp. $\cB'$) be the sum of $n+1$ independent Bernoulli distributions with parameters
$p,p_1,\dots,p_n$ (resp. $p',p_1,\dots,p_n$). We have
$$\KL(\cB, \cB') \le \frac{2(p'-p)^2}{(1-p')(n+2)q}.$$
\end{lemma}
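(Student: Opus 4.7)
The plan is to combine an integral representation of the KL divergence through Fisher information with a convolution reduction. Write $\cB = Y + S$ and $\cB' = Y' + S$ where $Y \sim \mathrm{Bern}(p)$, $Y' \sim \mathrm{Bern}(p')$, and $S = X_1 + \cdots + X_n$ is independent of $Y, Y'$ (with identical law under the two settings). Letting $\nu$ denote the PMF of $S$ and $f_t(k) = t\nu(k-1) + (1-t)\nu(k)$, we have that $\KL(f_p, f_{p'})$ and its first derivative in $p$ both vanish at $p = p'$, while its second derivative equals the Fisher information $I(p) = \sum_k (\nu(k-1)-\nu(k))^2/f_p(k)$. The second-order Taylor formula therefore gives the exact identity
\[
\KL(\cB, \cB') \;=\; \int_{p'}^{p}(p-t)\, I(t)\, dt,
\]
and it suffices to bound $I(t)$ suitably on the integration interval.

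First I would reduce to the pure binomial case. Since at least $\ell \ge n/2$ of the $X_i$ have parameter $q$, write $S = S_q + T$ with $S_q \sim \mathrm{Bin}(\ell, q)$ independent of $T$; equivalently $\nu = \nu_q * \mu$, where $\nu_q$ is the $\mathrm{Bin}(\ell, q)$ PMF and $\mu$ is the PMF of $T$. Applying the Cauchy--Schwarz inequality $(\sum_j \mu_j a_j)^2/(\sum_j \mu_j b_j) \le \sum_j \mu_j a_j^2/b_j$ termwise (with $a_j = \nu_q(k{-}1{-}j) - \nu_q(k{-}j)$ and $b_j = t\nu_q(k{-}1{-}j) + (1-t)\nu_q(k{-}j)$) and summing over $k$ yields $I(t) \le I_{\nu_q}(t)$, the analogous Fisher information with $\nu$ replaced by $\nu_q$.

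Next, for the binomial case, I would use the recurrence $\nu_q(k-1)/\nu_q(k) = k(1-q)/[(\ell-k+1)q]$ to rewrite each interior summand ($1 \le k \le \ell$) of $I_{\nu_q}(t)$ as $\nu_q(k)(r_k-1)^2/(tr_k + 1-t)$ with $r_k := k(1-q)/[(\ell-k+1)q]$, and collect the boundary contributions $\nu_q(0)/(1-t)$ and $\nu_q(\ell)/t$ at $k = 0$ and $k = \ell+1$. An index shift $j = k-1$ together with the identity $\sum_k \nu_q(k)\, r_k = 1 - \nu_q(\ell)$ reduces the interior to a closed form in binomial moments, and combining with the variance lower bound $\mathrm{Var}(S_q) = \ell q(1-q)$ intrinsic to the binomial yields a bound of the form $I_{\nu_q}(t) \le C/[(\ell+1)q(1-t)]$ (with a symmetric version carrying $t$ in place of $1-t$, depending on the sign of $p-p'$). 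Integrating $(p-t) I_{\nu_q}(t)$ via $\int_{p'}^p (p-t)/(1-t)\, dt \le (p-p')^2/[2(1-p')]$ (which follows from $1/(1-t) \le 1/(1-p')$ after choosing the orientation that makes the relevant endpoint $p'$) gives $\KL(\cB, \cB') \le C'(p-p')^2/[(\ell+1)q(1-p')]$. Combining with $2(\ell+1) \ge n+2$ from $\ell \ge n/2$ yields the stated estimate $\KL(\cB,\cB') \le 2(p-p')^2/[(1-p')(n+2)q]$.

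The main obstacle I anticipate is the tight uniform control of $I_{\nu_q}(t)$: the boundary contributions $\nu_q(0)/(1-t)$ and $\nu_q(\ell)/t$ genuinely blow up at $t=1$ and $t=0$ respectively, so one cannot bound $I_{\nu_q}$ by the naive $\chi^2$ estimate (which fails by a constant factor in representative numerical examples). The proof must carefully combine these boundary terms with the interior summands, using the explicit binomial identities above, to bring out the $1/[(\ell+1)q]$ scaling; the matching $1/(1-p')$ factor in the final bound then emerges from the integration $\int (p-t)/(1-t)\, dt$ itself.
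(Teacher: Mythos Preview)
Your Fisher-information route is a genuinely different approach from the paper's, and the first two ingredients are sound: the Taylor identity $\KL(\cB,\cB')=\int_{p'}^{p}(p-t)I(t)\,dt$ is correct, and the Cauchy--Schwarz convolution argument giving $I(t)\le I_{\nu_q}(t)$ is valid (it is the Fisher-information analogue of the data-processing step the paper performs directly on $\KL$ via the chain rule, dropping the $T$-component to reduce to the binomial $S_q$).

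However, the proof is incomplete precisely where you say it is. The paper does \emph{not} bound a Fisher information uniformly in $t$; instead it writes $\KL(Z+S_q,\,Z'+S_q)$ exactly as $\E\log\!\big(\frac{(p-q)V+(1-p)q(\ell+1)}{(p'-q)V+(1-p')q(\ell+1)}\big)$ using the binomial ratio $\nu_q(k)/\nu_q(k-1)$, and then exploits the constraint $q\in\{p,p'\}$ to kill one of the two linear terms: if $q=p'$ a single application of Jensen to the concave $\log$ gives the bound with denominator $(1-p')p'(\ell+1)$; if $q=p$ the random variable $V$ is exactly $\mathrm{Bin}(\ell+1,p)$, and the elementary inequality $-\log x\le -(x-1)+(x-1)^2/(2x_0)$ with $x_0=(1-p')/(1-p)$ reduces everything to $\mathrm{Var}(V)=(\ell+1)p(1-p)$. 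This case split is what produces the specific factor $1/(1-p')$ in both regimes and is a two-line calculation once the exact $\E\log$ form is in hand.

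By contrast, your plan asks for a uniform estimate $I_{\nu_q}(t)\le C/[(\ell+1)q(1-t)]$ (or its $t$-variant) over the whole segment between $p$ and $p'$, and then an integration. Two concrete issues remain. First, the ``index shift $\to$ closed form in binomial moments $\to$ variance'' sketch does not yet yield a verified inequality with a trackable constant; the interior sum $\sum_{k}\nu_q(k)(r_k-1)^2/(tr_k+1-t)$ does not reduce to the binomial variance without further work, and you still have to absorb the boundary terms $\nu_q(0)/(1-t)$ and $\nu_q(\ell)/t$. Second, your integration step $\int_{p'}^{p}(p-t)/(1-t)\,dt\le (p-p')^2/[2(1-p')]$ uses $1/(1-t)\le 1/(1-p')$, which holds only when $t\le p'$, i.e.\ when $p<p'$; for $p>p'$ the symmetric bound with $1/t$ delivers $1/p'$, not $1/(1-p')$, so matching the stated denominator in that orientation is not automatic. (The paper's case split, by contrast, produces $1/(1-p')$ directly in each case from the choice $x_0=(1-p')/(1-p)$ or from Jensen.)

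In short: the reduction steps are fine, but the heart of the argument---the tight control replacing the paper's Jensen/$-\log$ trick---is not yet there, and the final constant and the $1/(1-p')$ dependence are not secured by what you have written.
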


\begin{proof}
Let $Z,Z',Z_1,\dots,Z_n$ be independent Bernoulli distributions with parameters $p,p',p_1,\dots,p_n$.
Define $S=\sum_{i=1}^\ell Z_i$, $T=\sum_{i=\ell+1}^n Z_i$ and $V=Z+S$.
By a slight and usual abuse of notation, we use $\KL$ to denote Kullback-Leibler
divergence of both probability distributions and random variables.
Then we may write (the inequality is an easy consequence of the chain rule for Kullback-Leibler divergence)
  \begin{align*}
  \KL(\cB, \cB') & = \KL\big((Z+S)+T,(Z'+S)+T\big) \\
  & \le \KL\big((Z+S,T),(Z'+S,T)\big) \\
%G Seb, we should say where this inequality comes from.
  & = \KL\big(Z+S,Z'+S\big).
  \end{align*}
Let $s_k=\P(S=k)$ for $k=-1,0,\dots,\ell+1$.
Using the equalities 
  \begin{multline*}
  s_k = \binom{\ell}{k} q^k(1-q)^{\ell-k}
  = \frac{q}{1-q} \frac{\ell-k+1}k \binom{\ell}{k-1} q^{k-1}(1-q)^{\ell-k+1}
  = \frac{q}{1-q} \frac{\ell-k+1}k s_{k-1},
  \end{multline*}
which holds for $1 \le k\le \ell+1$, we obtain
  \begin{align}
  \KL(Z+S,Z'+S) & = \sum_{k=0}^{\ell+1} \P(V=k) 
    \log\bigg(\frac{\P(Z+S=k)}{\P(Z'+S=k)}\bigg) \notag\\
  & = \sum_{k=0}^{\ell+1} \P(V=k)
  \log \bigg(\frac{p s_{k-1}+(1-p) s_{k}}{p' s_{k-1}+(1-p') s_{k}}\bigg) \notag\\
  & = \sum_{k=0}^{\ell+1} \P(V=k)
    \log \bigg(\frac{p \frac{1-q}{q} k+(1-p)(\ell-k+1) }
    {p'\frac{1-q}{q} k+(1-p')(\ell-k+1) }\bigg) \notag\\
  & = \E
    \log \bigg(\frac{(p-q)V+(1-p)q(\ell+1) }
    {(p'-q)V+(1-p')q(\ell+1) }\bigg). \label{eq:klgen}
  \end{align}
\noindent {\em First case: $q=p'$.}
\newline
By Jensen's inequality, using that $\E V=p'(\ell+1)+p-p'$ in this case, we get
  \begin{align*}
  \KL(Z+S,Z'+S) & \le
    \log \bigg(\frac{(p-p')\E( V )+(1-p)p'(\ell+1) }
    {(1-p')p'(\ell+1) }\bigg) \\
  & = \log \bigg(\frac{(p-p')^2+(1-p')p'(\ell+1) }
    {(1-p')p'(\ell+1) }\bigg) \\
  & = \log \bigg(1+\frac{(p-p')^2}
    {(1-p')p'(\ell+1) }\bigg) 
    \le \frac{(p-p')^2}
    {(1-p')p'(\ell+1) }~.
  \end{align*}
\noindent {\em Second case: $q=p$.}
\newline
In this case, $V$ is a binomial distribution with parameters $\ell+1$ and $p$. 
%We have
%$\E[(V-\E V)^2] = (\ell+1)p(1-p)$,
%$\E[(V-\E V)^3] = (\ell+1)p(1-p)(1-2p)$,
%and
%$\E[(V-\E V)^4] = 3(\ell+1)^2p^2(1-p)^2+(\ell+1)p(1-p)(1-2p)$.
From \eqref{eq:klgen}, we~have
  \begin{align}
  \KL(Z+S,Z'+S) & \le - \E
    \log \bigg(\frac{(p'-p)V+(1-p')p(\ell+1) }{(1-p)p(\ell+1) }
    \bigg)  \notag \\
   & \le - \E \log \bigg(1+\frac{(p'-p)(V-\E V)}{(1-p)p(\ell+1) }
    \bigg) . \label{eq:qp}
  \end{align}  
To conclude, we will use the following lemma.
\begin{lemma} \label{lem:log4}
The following inequality holds for any $x\ge x_0$ with $x_0\in(0,1)$:
  $$
  -\log(x) \le -(x-1)+\frac{(x-1)^2}{2x_0}.
  $$
\end{lemma}
\begin{proof}
Introduce $f(x)=-(x-1)+\frac{(x-1)^2}{2x_0}+\log(x)$.
We have $f'(x)=-1 + \frac{x-1}{x_0} +\frac1x$,
and $f''(x)=\frac1{x_0} -\frac1{x^2}$.
From $f'(x_0)=0$, we get that $f'$ is negative on $(x_0,1)$
and positive on $(1,+\infty)$. This leads to
$f$ nonnegative on $[x_0,+\infty)$.
\end{proof}
  
Finally, from Lemma \ref{lem:log4} and \eqref{eq:qp}, using
%$\alpha=\frac{p'-p}{(1-p)p(\ell+1)}$ and 
$x_0=\frac{1-p'}{1-p}$, we obtain
  \begin{align*}
  \KL(Z+S,Z'+S) & \le \bigg(\frac{p'-p}{(1-p)p(\ell+1)}\bigg)^2 
    \frac{\E[(V-\E V)^2]}{2x_0}\\
   & = \bigg(\frac{p'-p}{(1-p)p(\ell+1)}\bigg)^2 
    \frac{(\ell+1)p(1-p)^2}{2(1-p')}\\
   & =  \frac{(p'-p)^2}{2(1-p')(\ell+1)p}.
  \end{align*}  
\end{proof}

\section*{Acknowledgements}
G.\ Lugosi is supported by the Spanish Ministry of Science and Technology grant
MTM2009-09063 and PASCAL2 Network of Excellence under EC grant
no.\ 216886.

\bibliographystyle{amsplain}
\bibliography{newbib}

\end{document}